\documentclass[twoside,11pt]{article}
\usepackage{times}
\usepackage{amsmath,amssymb}
\usepackage{graphicx}
\usepackage{subfigure}
\usepackage{multirow}
\usepackage{jmlr2e}
\usepackage[dvipdfm]{hyperref}

\makeatletter
\newcommand{\figcaption}[1]{\def\@captype{figure}\caption{#1}}
\newcommand{\tblcaption}[1]{\def\@captype{table}\caption{#1}}
\makeatother

\newcommand{\Real}{{\mathbb{R}}}

\newcommand{\minimize}{\mathop{\mathrm{minimize}}}

\newcommand{\Eqref}[1]{Equation~{\eqref{#1}}}
\newcommand{\Secref}[1]{Section~{\ref{#1}}}
\newcommand{\Figref}[1]{Figure~{\ref{#1}}}

\newcommand{\T}{^\top}

\def\dot#1#2{\left\langle #1,#2 \right\rangle}
\allowdisplaybreaks

\newcommand{\bm}[1]{\boldsymbol{#1}}

\newcommand{\vd}{\bm{d}}
\newcommand{\vf}{\bm{f}}
\newcommand{\vg}{\bm{g}}
\newcommand{\vw}{\bm{w}}
\newcommand{\vx}{\bm{x}}
\newcommand{\vy}{\bm{y}}

\newcommand{\valpha}{\bm{\alpha}}
\newcommand{\veta}{\bm{\eta}}

\newcommand{\mI}{\bm{I}}
\newcommand{\mK}{\bm{K}}

\newcommand{\gf}{\mathfrak{g}}


\newcommand{\calH}{{\mathcal H}}
\newcommand{\calX}{{\mathcal X}}
\newcommand{\calY}{{\mathcal Y}}

\newcommand{\argmin}{\mathop{\mathrm{argmin}}}


\title{Regularization Strategies and Empirical Bayesian Learning for MKL}
\author{\name Ryota Tomioka \email tomioka@mist.i.u-tokyo.ac.jp \\
\name Taiji Suzuki \email s-taiji@stat.t.u-tokyo.ac.jp\\
\addr Department of Mathematical Informatics, The University
of Tokyo,\\ 7-3-1, Hongo, Bunkyo-ku, Tokyo, 113-8656, Japan.
}
\editor{S\"oren Sonnenburg, Francis Bach, and Cheng Soon Ong}

\sloppy
\begin{document}
\maketitle

\begin{abstract}
Multiple kernel learning (MKL), structured sparsity, and multi-task
learning have recently received considerable attention. In this paper,
we show how different MKL algorithms can be understood as applications
of either regularization on the kernel weights or block-norm-based
regularization, which is more common in structured sparsity and
 multi-task learning. We show that these two regularization strategies can be
systematically mapped to each other through a concave conjugate
 operation. When the  kernel-weight-based regularizer is separable into
 components, we can naturally consider 
 a generative probabilistic model behind MKL. Based on this model, we
 propose learning algorithms for the kernel weights through the
 maximization of marginal likelihood. We show through numerical
 experiments that $\ell_2$-norm MKL and Elastic-net MKL achieve
 comparable accuracy to uniform kernel combination. Although uniform kernel
 combination might be preferable from its simplicity, $\ell_2$-norm MKL
 and Elastic-net MKL can learn the usefulness of the information sources
 represented as kernels. In particular, Elastic-net MKL achieves
 sparsity in the kernel weights.
\end{abstract}

\section{Introduction}
In many learning problems, the choice of feature representation,
descriptors, or kernels plays a crucial role. 
The optimal representation is problem specific. For example, we can
represent a web page as a bag-of-words, which might help us in
classifying whether the page is discussing politics or economy; we can
also represent the same page by the links provided in the page, which
could be more useful in classifying whether the page is supporting political
party A or B. Similarly in a visual categorization task, a color-based
descriptor might be useful in classifying an apple from a lemon but not in discriminating an airplane from a car.
Given that there is no single feature representation that works in every
learning problem, it is crucial to combine them in a problem dependent manner
for a successful data analysis.

In this paper, we consider the problem of combining multiple
data sources in a kernel-based learning framework.
%
More specifically, we assume that a data point $x\in\calX$ lies in
a space $\mathcal{X}$ and we are given $M$ candidate kernel functions
$k_m:\calX\times\calX\rightarrow\Real$  ($m=1,\ldots,M$). Each kernel
function corresponds to one data source. A conical combination of $k_m$
($m=1,\ldots,M$) gives the combined kernel function
$\bar{k}=\sum_{m=1}^Md_mk_m$, where $d_m$ is a nonnegative weight. 
Our goal is to find a good set of kernel weights based on some
training examples.


Various approaches have been proposed for the above problem under the name
multiple kernel learning
(MKL)~\citep{JMLR:Lanckriet+etal:2004,ICML:Bach+etal:2004,ZieOng07,VarRay07,AflBenBhaNatRam09,GehNow09,KloBreSonLasMueZie09,LonGal09,UAI:Cortes+etal:2009}. 
Recently, \cite{KloBreSonLasMueZie09,KloRucBar10b} have shown that
many MKL approaches can be understood as application of the
penalty-based regularization (Tikhonov regularization) or
constraint-based regularization (Ivanov regularization) on the kernel
weights $d_m$. Meanwhile,
there is a growing interest in learning under {\em structured
sparsity}
assumption~\citep{JRSS:YuanLin:2006,ArgEvgPon08,HuaZhaMet09,JacOboVer09,JenAudBac09},
which employs another regularization based on the so-called {\em block-norm}.

Therefore, a natural question to ask is how these two regularization
strategies are related to each other. For simple cases the
correspondence is well known (see
\cite{ICML:Bach+etal:2004,BacThiJor05,KloBreSonLasMueZie09}). Moreover,
in the context of structured sparsity, \cite{MicMorPon10} have proposed
a more sophisticated class of penalty functions that employ both
Tikhonov and Ivanov regularizations on the kernel weights and have shown
the corresponding block-norm regularization.  The first contribution 
of this paper is to show that under some mild assumptions the kernel-weight-based
regularization and the block-norm-based regularization can be mapped to
each other in a systematic manner through a concave conjugate operation.

All the regularization strategies we discussed so far is formulated as 
convex optimization problems. The second contribution of this paper is to
propose a nonconvex regularizer based on the
marginal likelihood. Although the overall minimization problem is
nonconvex, we propose a iterative algorithm that alternately
 minimize two convex objectives.
 Although Bayesian approaches have been applied to MKL
earlier in a transductive nonparametric setting by \cite{ZhaYeuKwo04}, and
a setting similar to the relevance vector machine~\citep{Tip01} by
\cite{GirRog05,DamGir08}, our formulation is more
coherent with the correspondence between Gaussian process
classification/regression and kernel methods~\citep{RasWil06}. Note that
very recently \cite{ArcBac10,Urt10} also studied similar models.

This paper is structured as follows. In \Secref{sec:mkl}, we start from 
analyzing learning with fixed kernel combination. Then we discuss the
two regularization strategies (kernel-weight-based regularization and
block-norm-based regularization). Finally, we present our main result on
the correspondence between the two formulations. 
In \Secref{sec:bayes}, we start from viewing the separable kernel-weight-based model  as a
hierarchical  maximum a posteriori(MAP) estimation problem, and propose 
an empirical Bayesian approach for the same model. Furthermore, we show
a connection to the general framework we discuss in \Secref{sec:mkl}.
We numerically compare the proposed empirical Bayesian MKL and various
MKL models  on visual categorization tasks from the Caltech
101 dataset~\citep{FeiFerPer04} using 1,760 kernel functions. Finally, we summarize our contributions in \Secref{sec:conclusion}.

\section{Multiple kernel learning frameworks and their connections}
\label{sec:mkl}
In this section, we first consider the problem of learning a classifier
with a fixed kernel combination. Then we extend this framework to jointly
optimize the kernel weights together with the classifier. Second, we
consider the {\em block-norm based regularization}, which have been
discussed in structured sparsity literature (including group lasso)~\citep{JRSS:YuanLin:2006,ArgEvgPon08,HuaZhaMet09,JacOboVer09,JenAudBac09}. 
Our main concern is in how these two formulations are related to each
other. We show two theorems (Theorem \ref{thm:forward} and
\ref{thm:backward}) that map the two formulations. Using these theorems, we show that previously
proposed regularized MKL models can be systematically transformed from
one formulation to another.

\subsection{Learning with fixed kernel combination}
\label{sec:fixed}
We assume that we are given $N$ training examples
$
(x_i,y_i)_{i=1}^{N}
$
where $x_i$ belongs to an input space $\calX$ and $y_i$ belongs to an
output space $\calY$ (usual settings are $\calY = \{\pm 1\}$ for
classification and $\calY = \Real$ for regression).

We first consider a learning problem with fixed kernel weights. More
specifically, we fix non-negative kernel weights $d_1,d_2,\ldots,d_M$ 
and consider the RKHS $\bar{\calH}$ corresponding to the
combined kernel function $\bar{k}=\sum_{m=1}^Md_mk_m$.
 The squared RKHS norm of a function
$\bar{f}$ in the combined RKHS $\bar{\calH}$ can be represented as follows:
\begin{align}
\label{eq:mkl-norm}
 \|\bar{f}\|_{\bar{\calH}}^2:=\min_{\substack{f_1\in\calH_1,\\\ldots,f_M\in\calH_M}}\sum_{m=1}^{M}\frac{\|f_m\|_{\calH_m}^2}{d_m}\quad
 {\rm s.t.}\,\, \bar{f}=\sum_{m=1}^Mf_m,
\end{align}
where $\calH_m$ is the RKHS that corresponds to the kernel
function $k_m$.  If $d_m=0$, the ratio $\|f_m\|_{\calH_m}^2/d_m$ is
defined to be zero if $\|f_m\|_{\calH_m}=0$ and infinity otherwise.
See  Sec 6 in \cite{AMS:Aronszajn:1950}, and also
Lemma~25 in \cite{JMLR:MicchelliPontil:2005} for the proof. We also provide some
intuition for a finite dimensional case in
Appendix~\ref{sec:proof-mkl-norm}. 


Using the above representation, a supervised learning
problem  with a fixed kernel combination can be written as follows:
\begin{align}
\label{eq:fixed-kernel-learning}
 \minimize_{
\substack{f_1\in\calH_1,\\\ldots,
 f_M\in\calH_M,\\
b\in\Real}}&\sum_{i=1}^N\ell\left(y_i,\textstyle\sum_{m=1}^Mf_m(x_i)+b\right)
+\frac{C}{2}\sum_{m=1}^M\frac{\|f_m\|_{\calH_m}^2}{d_m},
\end{align}
where $\ell:\Real\times\Real\rightarrow\Real$ is a loss function and we
assume that $\ell$ is convex in the second argument; for example, the
loss function can be the hinge loss 
$\ell_H(y_i,z_i)=\max(0,1-y_iz_i)$,
or the quadratic loss
$\ell_Q(y_i,z_i)=(y_i-z_i)^2/(2\sigma_y^2)$.

It might seem that we are making the problem unnecessarily complex by
introducing $M$ functions $f_m$ to optimize instead of simply optimizing over
$\bar{f}$. However, explicitly handling the kernel weights enables us to
consider various regularization strategies on the weights as we see in
the next subsection.

%

\subsection{Kernel-weight-based regularization}
\label{sec:kernel-weight}
Now we are ready to also optimize the kernel weights $d_m$ in the
above formulation.
Clearly there is a need for regularization, because the
objective~\eqref{eq:fixed-kernel-learning} is a monotone decreasing
function of the kernel weights $d_m$. Intuitively speaking, $d_m$
corresponds to the complexity allowed for the $m$th regression function
$f_m$; the more complexity we allow, the better the fit to the training
examples becomes. Thus without any constraint on $d_m$, we can get a
severe overfitting problem.

Let $h:\Real^M_+\rightarrow \Real\cup\{+\infty\}$ be a function from
a non-negative real vector $\vd\in\Real_+^{M}$ to a real number.
One way to penalize the complexity is to minimize the
objective~\eqref{eq:fixed-kernel-learning} together with the regularizer
$h(\vd)$ as follows:
\begin{align}
\label{eq:mkl-tikhonov}
  \minimize_{
\substack{
f_1\in\calH_1,\ldots,f_M\in\calH_M,\\
b\in\Real,\\
\vd\in\Real_+^{M}}}&\sum_{i=1}^N\ell\left(y_i,\textstyle\sum_{m=1}^Mf_m(x_i)+b\right)
+\frac{C}{2}\left(\sum_{m=1}^M\frac{\|f_m\|_{\calH_m}^2}{d_m}+  h(\vd)\right).
\end{align}
Note that if $h$ is a convex function, the above optimization problem is
{\em jointly convex} in $f_m$ and $d_m$ (see \citet[Example 3.18]{book:Boyd+Vandenberghe:2004})

\begin{example}[$\ell_p$-norm MKL via Tikhonov regularization]
\label{ex:lp-tikhonov}
Let $h(\vd)=\sum_{m=1}^Md_m^p/p$. Then we have
\begin{align*}
 \minimize_{
\substack{
f_1\in\calH_1,\ldots,f_M\in\calH_M,\\
b\in\Real,\\
\vd\in\Real_+^{M}}}&\sum_{i=1}^N\ell\left(y_i,\textstyle\sum_{m=1}^Mf_m(x_i)+b\right)
+\frac{C}{2}\sum_{m=1}^M\left(\frac{\|f_m\|_{\calH_m}^2}{d_m}+\frac{d_m^p}{p} \right)
\end{align*}
\end{example}
The special case $p=1$ was considered earlier in \cite{VarRay07}.

\begin{example}[$\ell_p$-norm MKL via Ivanov regularization]
\label{ex:lp-ivanov} 
Let $h(\vd)=0$ if $\sum_{m=1}^Md_m^p\leq 1$ and $h(\vd)=+\infty$
 otherwise. Then we have
\begin{align*}
 \minimize_{
\substack{
f_1\in\calH_1,\ldots,f_M\in\calH_M,\\
b\in\Real,\\
\vd\in\Real_+^{M}}}&\sum_{i=1}^N\ell\left(y_i,\textstyle\sum_{m=1}^Mf_m(x_i)+b\right)
+\frac{C}{2}\sum_{m=1}^M\frac{\|f_m\|_{\calH_m}^2}{d_m}\quad {\rm
 s.t.}\quad \sum_{m=1}^{M}d_m^p\leq 1.
\end{align*}
\end{example}
This formulation was considered by
\cite{KloBreLasSon08,KloBreSonLasMueZie09}. The special case $p=2$ was
considered by \cite{UAI:Cortes+etal:2009}.

\begin{example}[Multi-task learning]
\label{ex:multi-task} 
In this example, the linear combination inside the loss term is defined
 in a sample-dependent manner. More precisely, we assume that there are
 $n$ tasks, and each sample is associated with the $l(i)$th
 task. Let $\calH$ be an RKHS over the input space $\calX$. We consider
 $M=n+1$ functions $f_1,\ldots,f_{M}\in\calH$ to model the task
 dependent component and the task independent component.
 The first $n$ functions
 $f_1,\ldots,f_n\in\calH$  represent the task dependent components of
 the classifiers, and the last function $f_{M}\in\calH$ represents the
component that is common to all tasks. Accordingly, the optimization
problem can be expressed as follows:
\begin{align*}
\minimize_{
\substack{
f_1\in\calH_1,\ldots,f_M\in\calH_M,\\
b\in\Real,\\
\vd\in\Real_+^{M}}}&\sum_{i=1}^N\ell\left(y_i, f_{l(i)}(x_i)+f_{M}(x_i)+b\right)
+\frac{C}{2}\sum_{m=1}^M\frac{\|f_m\|_{\calH_m}^2}{d_m}\quad {\rm
 s.t.}\quad \sum_{m=1}^{M}d_m\leq 1.
\end{align*}
\end{example}
\cite{EvgPon04,EvgMicPon05} proposed a related model that only has one
 parameter; they used 
 $d_m=n\lambda$ for $m=1,\ldots,n$ and $\sum_{m=1}^{n}d_m/n^2+d_M\leq 1$
 instead of the constraint in the above formulation. However they did
 not discuss  joint optimization of the hyperparameter $\lambda$ and the
 classifier. 

\begin{example}[Wedge penalty]
\label{ex:wedge}
Let $h(\vd)=\sum_{m=1}^{M}d_m$ if $d_m\geq d_{m+1}$ for all $m=1,\ldots,M-1$, and $h(\vd)=+\infty$
 otherwise. Then we have
\begin{align*}
\minimize_{
\substack{
f_1\in\calH_1,\ldots,f_M\in\calH_M,\\
b\in\Real,\\
\vd\in\Real_+^{M}}}&\sum_{i=1}^N\ell\left(y_i,\textstyle\sum_{m=1}^Mf_m(x_i)+b\right)
+\frac{C}{2}\sum_{m=1}^M\left(\frac{\|f_m\|_{\calH_m}^2}{d_m}+d_m\right)\quad {\rm
 s.t.}\quad \vd\in W,
\intertext{where}
W&=\{\vd: \vd\in\Real_+^M, d_m\geq d_{m+1}, m=1,\ldots,M-1\}.
\end{align*}
\end{example}
This penalty function was considered by \cite{MicMorPon10}.

\subsection{Block-norm-based regularization}
\label{sec:block-norm}
Historically \cite{ICML:Bach+etal:2004,BacThiJor05} and
\cite{JMLR:MicchelliPontil:2005} pointed out
that the problem of learning kernel weights and classifier
simultaneously can be reduced to the problem of learning the classifier
under some special regularizer, which we call {\em block norm based
regularizer} in this paper. Generalizing the presentation in the earlier
papers, we define the block-norm-based regularization as follows:
\begin{align}
\label{eq:mkl-block-gen}
  \minimize_{f_1\in\calH_1,\ldots,
 f_M\in\calH_M,b\in\Real}&\sum_{i=1}^N\ell\left(y_i,\textstyle\sum_{m=1}^Mf_m(x_i)+b\right)+Cg(\|f_1\|_{\calH_1}^2,\ldots,\|f_M\|_{\calH_M}^2),
\end{align}
where $g:\Real_+^{M}\rightarrow \Real$ is called the block-norm-based regularizer.

\begin{example}[Block 1-norm MKL]
\label{ex:block1}
Let $g(\vx)=\sum_{m=1}^{M}\sqrt{x_m}$. Then we have
\begin{align}
 \label{eq:mkl-block1}
 \minimize_{f_1\in\calH_1,\ldots,
 f_M\in\calH_M,b\in\Real}&\sum_{i=1}^N\ell\left(y_i,\textstyle\sum_{m=1}^Mf_m(x_i)+b\right)
+C\sum\nolimits_{m=1}^M\|f_m\|_{\calH_m}.
\end{align}
\end{example}
This formulation was discussed earlier in \cite{BacThiJor05}. When all
the kernels are linear kernels defined on non-overlapping subset of
input variables, this is equivalent to the group lasso~\cite{JRSS:YuanLin:2006}.

\begin{example}[Overlapped group lasso]
Suppose that we have $D$ input variables $\vx=(x^{(1)},\ldots,x^{(D)})\T$
 and kernel functions $k_m$ are defined as overlapped linear kernels as
 follows:
\begin{align}
\label{eq:kern-overlasso}
 k_m(\vx,\vy)&=\sum_{l\in \gf_m}x^{(l)}y^{(l)}\quad (m=1,\ldots,M),
\end{align}
where $\gf_m$ $(m=1,\ldots,M)$ is a subset of (overlapped) indices from $\{1,\ldots,D\}$.
Introducing weight vectors $\vw_m\in\Real^{|\gf_m|}$ $(m=1,\ldots,M)$,
 we can rewrite $f_m(\vx)=\vw_m\T\vx^{(\gf_m)}$, where
 $\vx^{(\gf_m)}=(x^{(l)})_{l\in\gf_m}\T$. In addition,
 $\|f_m\|_{\calH_m}^2=\|\vw_{m}\|^2$. 
Then employing the same regularizer as in Example~\ref{ex:block1}, we
 have
\begin{align}
 \label{eq:overlasso}
 \minimize_{\vw_1\in\Real^{|\gf_1|},\ldots,
 \vw_M\in\Real^{|\gf_M|},b\in\Real}&\sum_{i=1}^N\ell\left(y_i,\textstyle\sum_{m=1}^M\vw_m\T\vx_i^{(\gf_m)})+b\right)
+C\sum\nolimits_{m=1}^M\|\vw_m\|.
\end{align}
\end{example}
This formulation was considered by \cite{JacOboVer09}. Except for the
particular choice of the kernel function~\eqref{eq:kern-overlasso}, this
is a special case of the block 1-norm MKL in Example~\ref{ex:block1}.

\begin{example}[Elastic-net MKL]
\label{ex:elastic}
Let
 $g(\vx)=\sum_{m=1}^{M}\left((1-\lambda)\sqrt{x_m}+\frac{\lambda}{2}x_m\right)$. Then
 we have
\begin{align}
\label{eq:mkl-block-elastic}
  \minimize_{\substack{f_1\in\calH_1,\ldots,
 f_M\in\calH_M,\\b\in\Real}}&\sum_{i=1}^N\ell\left(y_i,\textstyle\sum_{m=1}^Mf_m(x_i)+b\right)+C\sum_{m=1}^M\left((1-\lambda)\|f_m\|_{\calH_m}+\frac{\lambda}{2}\|f_m\|_{\calH_m}^2\right).
\end{align}
\end{example}
This formulation was discussed earlier in \cite{Sha08,LonGal09,arXiv:SparsityTradeoff:2010}.
Note that the elastic-net MKL~\eqref{eq:mkl-block-elastic} reduces to
 the block 1-norm MKL~\eqref{eq:mkl-block1}) for $\lambda=0$ and the
uniform-weight combination ($d_m=1$
in \Eqref{eq:fixed-kernel-learning}) for $\lambda=1$.

\subsection{Connection between the two formulations}
In this subsection, we present two theorems that connect the
kernel-weight-based regularization (\Secref{sec:kernel-weight}) and the
block-norm-based regularization (\Secref{sec:block-norm}). 

The following theorem states that under some assumptions about the
regularizer $h$, we can {\em analytically} eliminate the kernel weights $\vd$ from the
optimization problem~\eqref{eq:mkl-tikhonov}.

\begin{theorem}
\label{thm:forward}
 We assume that the kernel weight based regularizer $h$ is convex, zero at the origin, and satisfies the generalized monotonicity in the following sense: for
$\vx,\vy\in\Real^M_+$ satisfying $x_m\leq y_m$ ($m=1,\ldots,M$), $h$ satisfies
\begin{align}
\label{eq:nondecreasing}
 h(\vx)\leq h(\vy).
\end{align}
Moreover, let $\tilde{h}(\vy):=-h(1/y_1,\ldots,1/y_M)$.
Then $\tilde{h}$ is a concave function and the optimization problem~\eqref{eq:mkl-tikhonov} can be reduced to
 the following one:
\begin{align}
 \label{eq:mkl-block}
\minimize_{\substack{
f_1\in\calH_1,\ldots,f_M\in\calH_M,\\
b\in\Real}}
&\sum_{i=1}^N\ell\left(y_i,\textstyle\sum_{m=1}^Mf_m(x_i)+b\right)+Cg(\|f_1\|_{\calH_1}^2,\ldots,\|f_M\|_{\calH_M}^2),
\intertext{where}
\label{eq:gdef}
g(\vx)&=\frac{1}{2}\inf_{\vy\in\Real_+^M}\left(\vx\T\vy-\tilde{h}(\vy)\right)
\end{align}
is the concave conjugate function of $\tilde{h}$ (divided by
two).
 Moreover, if $g$ is differentiable, the optimal kernel weight
$d_m$ is obtained as follows:
\begin{align*}
 d_m&=\left(2\frac{\partial g(\|f_1\|_{\calH_1}^2,\ldots,\|f_M\|_{\calH_M}^2)}{\partial x_m}\right)^{-1}.
\end{align*}
\end{theorem}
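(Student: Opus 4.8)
The plan is to eliminate the kernel weights $\vd$ by partial minimization. Since $\vd$ appears only in the term $\frac{C}{2}\bigl(\sum_{m=1}^M \|f_m\|_{\calH_m}^2/d_m + h(\vd)\bigr)$ of the objective~\eqref{eq:mkl-tikhonov}, and since the identity $\min_{f,b,\vd} = \min_{f,b}\min_{\vd}$ holds for any objective, I would first fix $f_1,\ldots,f_M$ and $b$, abbreviate $x_m = \|f_m\|_{\calH_m}^2$, and study the inner problem
\[
\min_{\vd\in\Real_+^M}\Bigl(\sum_{m=1}^M \frac{x_m}{d_m} + h(\vd)\Bigr).
\]
Showing that this inner minimum equals $2g(\vx)$ immediately turns the full objective into the block-norm form~\eqref{eq:mkl-block} with regularizer $Cg$, which is exactly the claimed reduction.

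The key algebraic step is the change of variables $y_m = 1/d_m$. Under this substitution $\sum_m x_m/d_m = \vx\T\vy$ and $h(\vd) = h(1/y_1,\ldots,1/y_M) = -\tilde{h}(\vy)$, so the inner minimization becomes $\inf_{\vy}\bigl(\vx\T\vy - \tilde{h}(\vy)\bigr) = 2g(\vx)$, the scaled concave conjugate of $\tilde{h}$. Before this is meaningful I must establish that $\tilde{h}$ is concave, which is where the hypotheses on $h$ enter: each coordinate map $y_m \mapsto 1/y_m$ is convex on $(0,\infty)$, and $h$ is convex and nondecreasing in each argument by~\eqref{eq:nondecreasing}, so by the standard composition rule for convex functions \citep{book:Boyd+Vandenberghe:2004} the map $\vy \mapsto h(1/y_1,\ldots,1/y_M)$ is convex and hence $\tilde{h}$ is concave. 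Concavity of $g$ then follows automatically, since $g$ is an infimum of affine functions of $\vx$.

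Finally, to recover the optimal weights I would differentiate. When $g$ is differentiable, the infimum $2g(\vx) = \inf_{\vy}\bigl(\vx\T\vy - \tilde{h}(\vy)\bigr)$ is attained at some $\vy^\ast$, and the standard property of the conjugate (differentiating under the attained infimum) gives $y_m^\ast = 2\,\partial g(\vx)/\partial x_m$; undoing the substitution $d_m = 1/y_m^\ast$ yields the stated formula $d_m = (2\,\partial g/\partial x_m)^{-1}$. I expect the main obstacle to be the careful treatment of the boundary of $\Real_+^M$: the map $y_m = 1/d_m$ is a bijection only on the open orthant $(0,\infty)^M$, so I must check that the degenerate cases $d_m = 0$ (using the convention from~\eqref{eq:mkl-norm} that $x_m/d_m$ is $0$ when $x_m=0$ and $+\infty$ otherwise) and $d_m\to\infty$, corresponding to $y_m=+\infty$ and $y_m=0$ respectively, do not alter the value of the infimum and are correctly captured by extending it to the closed orthant $\Real_+^M$. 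Once the concavity of $\tilde{h}$ and this boundary bookkeeping are in place, the remaining steps are routine.
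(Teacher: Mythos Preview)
Your proposal is correct and follows essentially the same route as the paper: the same change of variables $y_m=1/d_m$, the same composition-rule argument (convex nondecreasing $h$ composed with the convex maps $y_m\mapsto 1/y_m$) to establish concavity of $\tilde{h}$, the same boundary bookkeeping for $d_m=0$, and the same conjugate/envelope argument to read off the optimal $d_m$ from $\partial g/\partial x_m$. The only minor point you leave implicit that the paper makes explicit is that the assumption $h(\boldzero)=0$ together with monotonicity forces $\tilde{h}\leq 0$, which is what rules out $y_m=+\infty$ being optimal when $x_m>0$.
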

\begin{proof}
 In order to show the concavity of $\tilde{h}$, we show the convexity of
 $h(1/y_1,\ldots,1/y_M)$. This is a straightforward generalization of
 the scalar composition rule in
 \cite[p84]{book:Boyd+Vandenberghe:2004}. Let $\phi$ be any scalar
 convex function. Then
\begin{align*}
&h(\phi(\theta x_1+(1-\theta)y_1),\ldots,\phi(\theta x_M+(1-\theta)y_M))\\
&\leq h(\theta \phi(x_1)+(1-\theta)\phi(y_1),\ldots,\theta \phi(x_M)+(1-\theta)\phi(y_M))\\
&\leq \theta h(\phi(x_1),\ldots,\phi(x_M))+(1-\theta)h(\phi(y_1),\ldots,\phi(y_M)),
\end{align*}
In the second line, we used the convexity of $\phi$ and the
 monotonicity of $h$ in \Eqref{eq:nondecreasing}.
Therefore, letting $\phi(x)=1/x$, we have the convexity of
 $h(1/y_1,\ldots,1/y_M)$ and the concavity of $\tilde{h}$. Now if the
 infimum in the minimization \eqref{eq:gdef} exists in $\Real_+^M$, we
 have the block-norm formulation~\eqref{eq:mkl-block} by substituting
 $x_m=\|f_m\|_{\calH_m}^2$ and $y_m=1/d_m$ for $m=1,\ldots,M$. The case 
 $y_m=+\infty$ for some $m$ happens only if $x_m=\|f_m\|_{\calH_m}^2=0$
 because $\tilde{h}(\vy)\leq 0$, and this corresponds to $d_m=0$ in the
 original formulation~\eqref{eq:mkl-tikhonov}. 
The  last part of the theorem follows from fact that
 $\tilde{h}(\vy)=\inf_{\vx\in\Real_+^M}\left(\vx\T\vy-2g(\vx)\right)$
 and the optimality condition
\begin{align*}
 \frac{1}{d_m}-2\frac{\partial
g(\|f_1\|_{\calH_1}^2,\ldots,\|f_M\|_{\calH_M}^2)}{\partial x_m}=0.
\end{align*}
\end{proof}
Note that the monotonicity assumption~\eqref{eq:nondecreasing} and the
convexity of $h$ form a strong sufficient condition for the above
correspondence to hold. What we need is the concavity of $\tilde{h}$. In
the context of variational Bayesian inference, such condition has been
intensively studied. See \Secref{sec:bayes} and \cite{WipNag09,SeeNic08}.

In the particularly simple cases where the kernel-weight-based
regularizer $h$ is {\em separable}, the block-norm-based regularizer $g$
is also separable as in the following corollary.
\begin{corollary}[Separable regularizer]
\label{cor:separable}
Suppose that the kernel-weight-based regularizer is defined as a separable
 function (with a slight abuse of notation) as follows:
\begin{align*}
h_{\rm sep}(\vd)&=\sum_{m=1}^{M}h(d_m),
\end{align*}
and $h$ is convex and nondecreasing.
Then, the  corresponding block-norm-based regularizer is also separable and can be
 expressed as follows:
\begin{align*}
 g_{\rm sep}(\vx)&=\frac{1}{2}\sum_{m=1}^{M}\inf_{y_m\geq 0}\left(x_my_m-\tilde{h}(y_m)\right),
\end{align*}
where $\tilde{h}(y)=-h(1/y)$.
\end{corollary}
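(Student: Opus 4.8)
The plan is to obtain the corollary as a direct specialization of Theorem~\ref{thm:forward} to the separable regularizer $h_{\rm sep}(\vd)=\sum_{m=1}^M h(d_m)$, combined with the elementary fact that the concave conjugate of a separable function over a product domain is again separable.

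First I would check that $h_{\rm sep}$ inherits the hypotheses required by Theorem~\ref{thm:forward}. Convexity is immediate, since a finite sum of convex functions is convex, and (after normalizing $h$ so that $h(0)=0$, which may be assumed without loss of generality because adding a constant to $h$ only shifts the overall objective by a constant) $h_{\rm sep}$ vanishes at the origin. The generalized monotonicity~\eqref{eq:nondecreasing} also transfers: if $x_m\leq y_m$ for every $m$, then each summand obeys $h(x_m)\leq h(y_m)$ because $h$ is nondecreasing, and summing preserves the inequality. Hence Theorem~\ref{thm:forward} applies, and the corresponding block-norm regularizer is given by~\eqref{eq:gdef}.

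Next I would compute $\tilde h_{\rm sep}$ explicitly. By definition $\tilde h_{\rm sep}(\vy)=-h_{\rm sep}(1/y_1,\ldots,1/y_M)=-\sum_{m=1}^M h(1/y_m)=\sum_{m=1}^M\tilde h(y_m)$, so $\tilde h_{\rm sep}$ is itself separable with scalar components $\tilde h(y)=-h(1/y)$. Substituting this together with $\vx\T\vy=\sum_{m=1}^M x_m y_m$ into~\eqref{eq:gdef} yields
\[
g_{\rm sep}(\vx)=\frac{1}{2}\inf_{\vy\in\Real_+^M}\sum_{m=1}^M\bigl(x_m y_m-\tilde h(y_m)\bigr).
\]

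The main, though routine, step is then to decouple this infimum. Because the objective is a sum of terms each depending on a single coordinate $y_m$, and because the feasible set $\Real_+^M=\prod_{m=1}^M\Real_+$ is a Cartesian product that imposes no coupling across coordinates, the infimum over the vector $\vy$ splits into a sum of independent scalar infima, giving exactly the claimed formula. I expect the only point requiring care to be this separation of the infimum (equivalently, the statement that the conjugate of a separable function is the sum of the componentwise conjugates), together with the book-keeping around the degenerate values $y_m=+\infty$, which corresponds to $x_m=\|f_m\|_{\calH_m}^2=0$ and is already handled in the proof of Theorem~\ref{thm:forward}.
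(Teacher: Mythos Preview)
Your proposal is correct and matches the paper's approach: the paper simply states that the proof is straightforward and omits it, pointing to \citet[p95]{book:Boyd+Vandenberghe:2004} for the fact that the conjugate of a separable function decomposes componentwise. Your write-up is just a fleshed-out version of that same argument.
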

\begin{proof}
 The proof is straightforward and is omitted (see e.g., \citet[p95]{book:Boyd+Vandenberghe:2004}).
\end{proof}
Separable regularizers proposed earlier in literature are summarized in Table~\ref{tab:correspond}.

The monotonicity assumption~\eqref{eq:nondecreasing} holds for the
regularizers defined in
Examples~\ref{ex:lp-tikhonov}--\ref{ex:wedge}. Therefore, we have the
following corollaries.

\begin{corollary}[Block $q$-norm formulation via Tikhonov regularization]
Applying Theorem~\ref{thm:forward} to the $\ell_p$-norm MKL in
 Example~\ref{ex:lp-tikhonov}, we have
\begin{align}
\label{eq:mkl-blockq-1}
 \minimize_{f_1\in\calH_1,\ldots,
 f_M\in\calH_M,b\in\Real}&\sum_{i=1}^N\ell\left(y_i,\textstyle\sum_{m=1}^Mf_m(x_i)+b\right)
+\frac{C}{q}\sum\nolimits_{m=1}^M\|f_m\|_{\calH_m}^{q},
\end{align}
where  $q=2p/(1+p)$.
\end{corollary}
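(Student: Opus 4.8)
The plan is to apply Theorem~\ref{thm:forward} through its separable specialization, Corollary~\ref{cor:separable}, since the $\ell_p$-norm Tikhonov regularizer of Example~\ref{ex:lp-tikhonov} is separable with per-component penalty $h(d)=d^p/p$. First I would check that the hypotheses hold: for $p\ge 1$ the scalar map $d\mapsto d^p/p$ is convex and nondecreasing on $\Real_+$ and vanishes at the origin, so Corollary~\ref{cor:separable} applies and it suffices to compute the one-dimensional concave conjugate for a single block.

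Next I would form $\tilde{h}(y)=-h(1/y)=-\tfrac{1}{p}y^{-p}$ and evaluate the scalar infimum defining $g$, namely
\[
 g(x)=\tfrac{1}{2}\inf_{y\ge 0}\Bigl(xy+\tfrac{1}{p}y^{-p}\Bigr).
\]
The objective $xy+\tfrac{1}{p}y^{-p}$ is strictly convex in $y>0$ for $x>0$; setting its derivative $x-y^{-p-1}$ to zero gives the minimizer $y^\star=x^{-1/(p+1)}$, which lies in $(0,\infty)$, so the infimum is attained in $\Real_+$ as required by the theorem. Substituting back, the two terms become $x^{p/(p+1)}$ and $\tfrac{1}{p}x^{p/(p+1)}$, so that $g(x)=\tfrac{p+1}{2p}\,x^{p/(p+1)}$.

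Finally I would substitute $x_m=\|f_m\|_{\calH_m}^2$ and set $q=2p/(1+p)$. Then $x_m^{p/(p+1)}=\|f_m\|_{\calH_m}^{q}$ and the prefactor simplifies as $\tfrac{p+1}{2p}=\tfrac{1}{q}$, giving $g(\|f_m\|_{\calH_m}^2)=\tfrac{1}{q}\|f_m\|_{\calH_m}^{q}$; summing over $m$ and inserting into \eqref{eq:mkl-block} yields exactly \eqref{eq:mkl-blockq-1}. As a consistency check one may also read off the optimal weight from the last part of Theorem~\ref{thm:forward} as $d_m=(2\,\partial g/\partial x_m)^{-1}=\|f_m\|_{\calH_m}^{2-q}$, matching $1/y^\star$.

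The computation is essentially routine, so the only points needing care are the boundary behaviours. When $x=0$ (i.e.\ $\|f_m\|_{\calH_m}=0$) the minimizing $y^\star\to\infty$, so the infimum is $0$, consistent with $g(0)=0$ and with the limit $d_m=1/y\to 0$ of the optimal weight; this is precisely the $y_m=+\infty$ case permitted in the proof of Theorem~\ref{thm:forward}. I expect the main (minor) obstacle to be confirming that for $p\ge 1$ the value $q=2p/(1+p)$ lies in $[1,2)$, so that $g$ is genuinely a convex block $q$-norm penalty, recovering the block $1$-norm of Example~\ref{ex:block1} at $p=1$ and approaching the uniform-weight $\ell_2$ case as $p\to\infty$.
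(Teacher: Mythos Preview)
Your proposal is correct and follows essentially the same route as the paper: invoke the separable case (Corollary~\ref{cor:separable}), set $\tilde{h}(y)=-y^{-p}/p$, and compute the one-dimensional concave conjugate to obtain $g(x)=\tfrac{1+p}{2p}x^{p/(1+p)}$, then substitute $x_m=\|f_m\|_{\calH_m}^2$ and $q=2p/(1+p)$. The paper's proof is simply terser, omitting the explicit minimization over $y$ and the boundary and consistency checks you supply.
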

\begin{proof}
Note that the regularizer  $h(\vd)=\sum_{m=1}^{M}d_m^p/p$
in Example~\ref{ex:lp-tikhonov} is separable as in
 Corollary~\ref{cor:separable}. Therefore, we define
 $\tilde{h}(y_m)=-y_m^{-p}/p$, and accordingly we have
 $g(x_m)=\frac{1+p}{2p}x_m^{p/(1+p)}$, from which we obtain
 \Eqref{eq:mkl-blockq-1}.
\end{proof}

\begin{corollary}[Block $q$-norm formulation via Ivanov
 regularization]
\label{cor:blockq-ivanov}
 Applying Theorem~\ref{thm:forward} to the
 $\ell_p$-norm MKL in Example~\ref{ex:lp-ivanov}, we have
\begin{align}
 \label{eq:mkl-blockq-2}&
 \minimize_{f_1\in\calH_1,\ldots,
 f_M\in\calH_M,b\in\Real}&\sum_{i=1}^N\ell\left(y_i,\textstyle\sum_{m=1}^Mf_m(x_i)+b\right)
+\frac{C}{2}\left(\sum\nolimits_{m=1}^M\|f_m\|_{\calH_m}^{q}\right)^{2/q},
\end{align}
 where $q=2p/(1+p)$.
\end{corollary}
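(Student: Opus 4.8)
The plan is to evaluate the block-norm regularizer $g$ from \eqref{eq:gdef} explicitly for the Ivanov-type $h$ of Example~\ref{ex:lp-ivanov}, and then substitute $x_m=\|f_m\|_{\calH_m}^2$. First I would form $\tilde h$. Since $h(\vd)$ equals $0$ on $\{\vd\in\Real_+^M:\sum_m d_m^p\le 1\}$ and $+\infty$ otherwise, the substitution $d_m=1/y_m$ gives
\begin{align*}
\tilde{h}(\vy)=-h(1/y_1,\ldots,1/y_M)=\begin{cases}0 & \text{if } \sum_{m=1}^M y_m^{-p}\le 1,\\ -\infty & \text{otherwise,}\end{cases}
\end{align*}
so $\tilde h$ is (minus) the indicator of the feasible set $S=\{\vy\in\Real_+^M:\sum_m y_m^{-p}\le1\}$, which is concave as Theorem~\ref{thm:forward} guarantees.

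Plugging this into \eqref{eq:gdef}, the concave conjugate collapses to the constrained linear minimization
\begin{align*}
g(\vx)=\frac12\inf_{\vy\in S}\sum_{m=1}^M x_m y_m .
\end{align*}
The key point, and the reason this corollary is not an immediate consequence of Corollary~\ref{cor:separable}, is that the constraint $\sum_m y_m^{-p}\le1$ couples all the coordinates $y_m$, so the minimization does not separate across $m$ as it did in the Tikhonov case.

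To carry out this coupled minimization I would use H\"older's inequality rather than Lagrange multipliers. Writing $q=2p/(1+p)$, set $\alpha=p/(1+p)$ and $\beta=1/(1+p)$, so that $\alpha+\beta=1$ and $x_m^{q/2}=(x_m y_m)^{\alpha}(y_m^{-p})^{\beta}$. H\"older's inequality with the conjugate exponents $1/\alpha$ and $1/\beta$ then gives
\begin{align*}
\sum_{m=1}^M x_m^{q/2}\le\Big(\sum_{m=1}^M x_m y_m\Big)^{\alpha}\Big(\sum_{m=1}^M y_m^{-p}\Big)^{\beta}\le\Big(\sum_{m=1}^M x_m y_m\Big)^{\alpha},
\end{align*}
where the second inequality uses $\vy\in S$. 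Rearranging (raising to the power $1/\alpha=2/q$) yields the lower bound $\sum_m x_m y_m\ge(\sum_m x_m^{q/2})^{2/q}$, and choosing $\vy$ so that the H\"older equality condition $x_m y_m^{1+p}=\text{const}$ holds while the constraint is active attains it; hence the infimum equals $(\sum_m x_m^{q/2})^{2/q}$. Therefore $g(\vx)=\frac12(\sum_m x_m^{q/2})^{2/q}$, and substituting $x_m=\|f_m\|_{\calH_m}^2$ (so that $x_m^{q/2}=\|f_m\|_{\calH_m}^q$) produces exactly the regularizer $\frac{C}{2}(\sum_m\|f_m\|_{\calH_m}^q)^{2/q}$ of \eqref{eq:mkl-blockq-2}.

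The main obstacle is the coupled minimization in the second display; everything else is bookkeeping. I expect the cleanest route is the H\"older bound above, but I would verify the equality case carefully, in particular that the minimizing $\vy$ stays in $\Real_+^M$ and that the constraint is indeed active. The latter follows because the objective is strictly increasing in each $y_m$ while the constraint forces the $y_m$ to be bounded below; the degenerate coordinates with $x_m=0$ (where $y_m\to+\infty$, i.e.\ $d_m=0$) are exactly the boundary case already handled in the proof of Theorem~\ref{thm:forward}.
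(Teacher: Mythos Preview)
Your proof is correct and follows the same overall structure as the paper: compute $\tilde h$ as (minus) the indicator of $\{\vy:\sum_m y_m^{-p}\le1\}$, reduce $g(\vx)$ to the constrained linear minimization $\frac12\min_{\vy\in S}\vx\T\vy$, solve it, and substitute $x_m=\|f_m\|_{\calH_m}^2$.

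The one genuine difference is how you solve the coupled minimization. The paper proceeds by Lagrange multipliers: it introduces $\mathcal{L}=\frac12\vx\T\vy+\frac{\eta}{2p}(\sum_m y_m^{-p}-1)$, solves the stationarity condition to get $y_m=(\eta/x_m)^{1/(1+p)}$, determines $\eta$ from the active constraint, and substitutes back to obtain $g(\vx)=\frac12(\sum_m x_m^{p/(1+p)})^{(1+p)/p}$. Your H\"older argument, via the factorization $x_m^{q/2}=(x_my_m)^{p/(1+p)}(y_m^{-p})^{1/(1+p)}$, gives the same answer but has the advantage of certifying optimality directly: the inequality provides a lower bound on the objective over all feasible $\vy$, and the equality case exhibits the minimizer, so no second-order or convexity check is needed. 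The Lagrangian route is more mechanical and does not require spotting the factorization, but strictly speaking it only locates a stationary point. Both arguments implicitly use that the constraint is active at the optimum, which you correctly identify and justify; the paper leaves this tacit. Your handling of the degenerate coordinates $x_m=0$ (sending $y_m\to+\infty$, i.e.\ $d_m=0$) is also in line with the boundary case treated in the proof of Theorem~\ref{thm:forward}.
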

\begin{proof}
 For the regularizer defined in Example~\ref{ex:lp-ivanov}, we have
\begin{align*}
 \tilde{h}(\vy)=
\begin{cases}
 0 & (\textrm{if $\sum_{m=1}^{M}y_m^{-p}\leq 1$}),\\
-\infty & (\textrm{otherwise}).
\end{cases}
\end{align*}
Then the block-norm-based regularizer $g$ \eqref{eq:gdef} is defined as
 the minimum of the following constrained minimization problem
\begin{align*}
 g(\vx)=\frac{1}{2}\min_{\vy\in\Real_+^{M}}\vx\T\vy\quad {\rm s.t.}\quad
 \sum_{m=1}^{M}y_m^{-p}\leq 1.
\end{align*}
We define the Lagrangian
$\mathcal{L}=\frac{1}{2}\vx\T\vy+\frac{\eta}{2p}(\sum_{m=1}^{M}y_m^{-p}-1)$. Taking
the derivative of the Lagrangian and setting it to zero, we have
\begin{align*}
 y_m&=\left(\frac{\eta}{x_m}\right)^{1/(1+p)}.
\end{align*}
In addition, the multiplier $\eta$ is obtained as
 $\eta=\left(\sum_{m=1}^{M}x_m^{p/(1+p)}\right)^{(1+p)/p}$. Combining
 the above two expressions, we have
\begin{align*}
 g(\vx)&=\frac{1}{2}\left(\sum_{m=1}^{M}x_m^{p/(1+p)}\right)^{(1+p)/p},
\end{align*}
from which we obtain \Eqref{eq:mkl-blockq-2}.
\end{proof}

Except for the special structure inside the loss term, the multi-task
 learning problem is a special case of the $\ell_p$-norm MKL with
 $p=1$. Therefore, we have the following corollary.
\begin{corollary}[Multi-task learning]
Applying Theorem~\ref{thm:forward} to the multi-task learning problem in
 Example~\ref{ex:multi-task}, we have
\begin{align}
\label{eq:block-multi-task}
\minimize_{
\substack{
f_1\in\calH_1,\ldots,f_M\in\calH_M,\\
b\in\Real,\\
\vd\in\Real_+^{M}}}&\sum_{i=1}^N\ell\left(y_i, f_{l(i)}(x_i)+f_{M}(x_i)+b\right)
+\frac{C}{2}\left(\sum_{m=1}^M\|f_m\|_{\calH_m}\right)^2.
\end{align}
Moreover, the one-parameter case considered in \cite{EvgPon04,EvgMicPon05}, we have
\begin{align*}
\minimize_{
\substack{
f_1\in\calH_1,\ldots,f_M\in\calH_M,\\
b\in\Real,\\
\vd\in\Real_+^{M}}}&\sum_{i=1}^N\ell\left(y_i, f_{l(i)}(x_i)+f_{M}(x_i)+b\right)
+\frac{C}{2}\left(\sqrt{\textstyle\frac{1}{n}\sum_{m=1}^{n}\|f_m\|_{\calH_m}^2}+\|f_M\|_{\calH_M}\right)^2.
\end{align*}
\end{corollary}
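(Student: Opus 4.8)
The plan is to obtain both displays by eliminating the kernel weights $\vd$ from the corresponding constrained Tikhonov problems, exploiting the key observation that Theorem~\ref{thm:forward}, and hence Corollary~\ref{cor:blockq-ivanov}, operates \emph{only} through the regularizer and is completely insensitive to how the functions $f_m$ enter the loss. Concretely, the multi-task objective of Example~\ref{ex:multi-task} coincides with the $\ell_1$-norm MKL of Example~\ref{ex:lp-ivanov} (the case $p=1$) except that the loss is evaluated at $f_{l(i)}(x_i)+f_M(x_i)$ rather than at $\sum_{m=1}^Mf_m(x_i)$; since the weights $d_m$ never appear in the loss, the minimization over $\vd$ proceeds identically and this special loss structure is simply carried along unchanged.

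For the first display I would therefore invoke Corollary~\ref{cor:blockq-ivanov} with $p=1$, so that $q=2p/(1+p)=1$ and the block-$q$-norm penalty of \eqref{eq:mkl-blockq-2} collapses to
\begin{align*}
\frac{C}{2}\left(\sum_{m=1}^M\|f_m\|_{\calH_m}^{1}\right)^{2/1}=\frac{C}{2}\left(\sum_{m=1}^M\|f_m\|_{\calH_m}\right)^2,
\end{align*}
which is exactly \eqref{eq:block-multi-task}.

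For the second display the weights are tied by the one-parameter reparametrization $d_m=n\lambda$ ($m=1,\ldots,n$), so Theorem~\ref{thm:forward} no longer applies verbatim; instead I would perform the weight elimination by a direct constrained minimization that rests on the same quadratic-over-linear principle. Substituting $d_m=n\lambda$ into the constraint $\sum_{m=1}^nd_m/n^2+d_M\leq1$ reduces it to $\lambda+d_M\leq1$, and the regularizer becomes
\begin{align*}
\frac{C}{2}\left(\frac{1}{\lambda}\cdot\frac{1}{n}\sum_{m=1}^n\|f_m\|_{\calH_m}^2+\frac{\|f_M\|_{\calH_M}^2}{d_M}\right).
\end{align*}
Setting $u=\frac{1}{n}\sum_{m=1}^n\|f_m\|_{\calH_m}^2$ and $v=\|f_M\|_{\calH_M}^2$, I minimize $u/\lambda+v/d_M$ over $\lambda,d_M\geq0$ subject to $\lambda+d_M\leq1$. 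The objective decreases in both $\lambda$ and $d_M$, so the constraint is active, and the quadratic-over-linear inequality $u/\lambda+v/d_M\geq(\sqrt{u}+\sqrt{v})^2/(\lambda+d_M)$ (with equality when $\lambda/\sqrt{u}=d_M/\sqrt{v}$) gives the minimum $\frac{C}{2}(\sqrt{u}+\sqrt{v})^2$, which is the claimed second display.

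The calculations are routine, and I expect the only delicate points to be the explicit justification that the weight elimination ignores the loss-term coupling $f_{l(i)}+f_M$ (so that Corollary~\ref{cor:blockq-ivanov} may be reused), and the observation that the tied-weight Evgeniou--Pontil constraint forces a direct minimization rather than a literal application of Theorem~\ref{thm:forward}, including checking that the optimum lies in the interior $\lambda,d_M>0$ whenever both $u$ and $v$ are positive.
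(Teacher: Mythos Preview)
Your proposal is correct and follows essentially the same route as the paper: the first display is obtained by specializing Corollary~\ref{cor:blockq-ivanov} to $p=1$, and the one-parameter case is handled by the same two-term inequality (the paper phrases it as Jensen's inequality for $x\mapsto x^2$ after setting $d_M=1-\lambda$, while you phrase it as the quadratic-over-linear bound with the constraint shown active, but these are the same computation). Your explicit remark that the weight elimination is independent of the loss-term coupling $f_{l(i)}+f_M$ is a useful clarification that the paper leaves implicit.
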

\begin{proof}
 The first part is a special case of
 Corollary~\ref{cor:blockq-ivanov}. The one-parameter case can be
 derived using Jensen's inequality as follows. Let $d_m=n\lambda$
 ($m=1,\ldots,n$) and $d_M=1-\lambda$. Using Jensen's inequality, we
 have
\begin{align*}
 \frac{\frac{1}{n}\sum_{m=1}^{n}\|f_m\|_{\calH_m}^2}{\lambda}+\frac{\|f_M\|_{\calH_M}^2}{1-\lambda}
&=\lambda\left(\frac{\sqrt{\frac{1}{n}\sum_{m=1}^{n}\|f_m\|_{\calH_m}^2}}{\lambda}\right)^2+(1-\lambda)\left(\frac{\|f_M\|_{\calH_M}}{1-\lambda}\right)^2\\
&\geq \left(\sqrt{\textstyle\frac{1}{n}\sum_{m=1}^{n}\|f_m\|_{\calH_m}^2}+\|f_M\|_{\calH_M}\right)^2.
\end{align*}
The equality holds when $(1-\lambda)/\lambda=\|f_m\|_{\calH_M}/\sqrt{\frac{1}{n}\sum_{m=1}^{n}\|f_m\|_{\calH_m}^2}$.
\end{proof}
Note that in the one-parameter case discussed in
\cite{EvgPon04,EvgMicPon05}, the solution of the joint optimization of
task similarity $d_m$ and the classifier results in either
$\sqrt{\frac{1}{n}\sum_{m=1}^n\|f_m\|_{\calH_m}^2}=0$ (all the tasks are
the same) or $\|f_M\|_{\calH_M}=0$ (all the tasks are different). For
the general problem~\eqref{eq:block-multi-task}, some
$\|f_m\|_{\calH_m}$ may become zero but not necessarily all tasks.

\begin{corollary}[Wedge penalty]
Applying Theorem~\ref{thm:forward}  to the Wedge regularization in
 Example~\ref{ex:wedge}, we have
\begin{align}
\label{eq:mkl-block-wedge}
 \minimize_{f_1\in\calH_1,\ldots,
 f_M\in\calH_M,b\in\Real}&\sum_{i=1}^N\ell\left(y_i,\textstyle\sum_{m=1}^Mf_m(x_i)+b\right)
+Cg(\|f_1\|_{\calH_1}^2,\ldots,\|f_M\|_{\calH_M}^2).
\intertext{where}
\label{eq:gdef-wedge}
g(x_1,\ldots,x_M)&=\sup_{\eta_1,\ldots,\eta_{M-1}\geq 0}\sum_{m=1}^{M}\sqrt{(1+\eta_{m-1}-\eta_{m})x_m},
\end{align}
with $\eta_0=\eta_{M}=0$.
\end{corollary}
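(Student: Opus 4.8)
The plan is to apply Theorem~\ref{thm:forward} to write $g$ as an infimum and then expose the claimed supremum by Lagrangian duality on the ordering constraints of the wedge. First I would instantiate the theorem for the regularizer of Example~\ref{ex:wedge}. Since $h(\vd)=\sum_{m=1}^M d_m$ on $W$ and $+\infty$ off it, we get $\tilde h(\vy)=-\sum_{m=1}^M 1/y_m$ on $\{y_1\le\cdots\le y_M\}$ and $-\infty$ elsewhere; undoing the substitution $y_m=1/d_m$ in Equation~\eqref{eq:gdef} turns it into
\[
 g(\vx)=\frac{1}{2}\inf_{\vd\in W}\left(\sum_{m=1}^M\frac{x_m}{d_m}+\sum_{m=1}^M d_m\right),
\]
that is, the Tikhonov penalty of Example~\ref{ex:wedge} minimized over the wedge. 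This inner problem is convex on the positive orthant, since $x_m/d_m$ is convex in $d_m$ while the remaining terms and the ordering constraints are affine, so it is the right object to dualize.

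Next I would dualize only the ordering constraints $d_{m+1}-d_m\le 0$ ($m=1,\dots,M-1$), attaching a multiplier $\eta_m\ge 0$ to each and setting $\eta_0=\eta_M=0$; the barrier $x_m/d_m$ keeps every $d_m$ strictly positive at any finite-objective point, so the constraints $d_m\ge 0$ never bind and need not be dualized. The Lagrangian is
\[
 L(\vd,\veta)=\sum_{m=1}^M\frac{x_m}{d_m}+\sum_{m=1}^M d_m+\sum_{m=1}^{M-1}\eta_m(d_{m+1}-d_m).
\]
The key algebraic step is a telescoping rearrangement of the coupling term, $\sum_{m=1}^{M-1}\eta_m(d_{m+1}-d_m)=\sum_{m=1}^M(\eta_{m-1}-\eta_m)d_m$, which makes the coefficient of each $d_m$ equal to $1+\eta_{m-1}-\eta_m$. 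The Lagrangian then decouples across $m$, and minimizing the scalar term $x_m/d_m+(1+\eta_{m-1}-\eta_m)d_m$ over $d_m>0$ gives $2\sqrt{(1+\eta_{m-1}-\eta_m)x_m}$ (attained at $d_m=\sqrt{x_m/(1+\eta_{m-1}-\eta_m)}$) whenever the coefficient is positive. Dividing by two then reproduces Equation~\eqref{eq:gdef-wedge}.

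The main obstacle will be justifying the exchange of the inner minimization with the outer maximization, i.e.\ strong duality. I would invoke Slater's condition, which holds because $d_1>d_2>\cdots>d_M>0$ is strictly feasible for the affine-constrained convex program, so that $\inf_{\vd\in W}L=\sup_{\veta\ge 0}\inf_{\vd>0}L$. Two bookkeeping points then close the argument: for multipliers with $1+\eta_{m-1}-\eta_m\le 0$ the inner infimum is $-\infty$ (let $d_m\to\infty$), so the outer supremum is automatically confined to the region where every coefficient is nonnegative, which is exactly where the square roots in Equation~\eqref{eq:gdef-wedge} are real; and the degenerate cases $x_m=0$ or a vanishing coefficient are handled by continuity, taking $d_m\to 0$.
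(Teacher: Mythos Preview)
Your proposal is correct and follows essentially the same route as the paper: both compute $g$ via Lagrangian duality on the ordering constraints, minimize the decoupled terms to obtain the square roots, and arrive at Equation~\eqref{eq:gdef-wedge}. The only cosmetic difference is that the paper carries out the computation in the variables $y_m=1/d_m$ (dualizing $y_{m+1}^{-1}\le y_m^{-1}$) whereas you work directly in $d_m$; your explicit invocation of Slater's condition and the handling of the degenerate coefficient cases are welcome additions that the paper omits.
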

\begin{proof}
For the regularizer defined in Example~\ref{ex:wedge}, we have
\begin{align*}
g(\vx)&=\frac{1}{2}\min_{\vy\in\Real_+^{M}}\sum_{m=1}^{M}\left(x_my_m+y_m^{-1}\right)
 \quad {\rm s.t.} \quad y_{m+1}^{-1}\leq y_{m}^{-1} (m=1,\ldots,M-1).
\end{align*}
We define the Lagrangian $\mathcal{L}$ as follows:
\begin{align*}
 \mathcal{L}&=\frac{1}{2}\left(\sum_{m=1}^{M}\left(x_my_m+y_m^{-1}\right)+\sum_{m=1}^{M-1}\eta_m(y_{m+1}^{-1}-y_{m}^{-1})\right).
\end{align*}
Minimizing the Lagrangian with respect to $\vy$, we have
\begin{align*}
 y_m&=\sqrt{\frac{1+\eta_{m-1}-\eta_{m}}{x_m}},
\end{align*}
where we define $\eta_0=\eta_M=0$ for convenience. Substituting the
above expression back into the Lagrangian and forming the dual problem
we obtain \Eqref{eq:gdef-wedge}.
\end{proof}

The following theorem is useful in mapping algorithms defined in the
block-norm formulation (\Secref{sec:block-norm}) back to the
kernel-weight-based formulation (\Secref{sec:kernel-weight}).
\begin{theorem}[Converse of Theorem~\ref{thm:forward}]
\label{thm:backward}
If the block-norm-based regularizer $g$ in
 formulation~\eqref{eq:mkl-block-gen} is a concave
function, we can derive the corresponding kernel-weight based
regularizer $h$ as follows:
\begin{align*}
 h(d_1,\ldots,d_M)&=-2g^{\ast}\left(1/(2d_1),\ldots,1/(2d_M)\right),
\end{align*}
where $g^{\ast}$ denotes the concave conjugate of $g$ defined as
 follows:
\begin{align*}
 g^{\ast}(\vy)&=\inf_{\vx\in\Real_+^M}\left(\vx\T\vy-g(\vx)\right).
\end{align*}
\end{theorem}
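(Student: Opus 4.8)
The plan is to show that the $h$ defined in the statement is exactly the preimage of $g$ under the forward map of Theorem~\ref{thm:forward}; equivalently, that feeding this $h$ back through the recipe~\eqref{eq:gdef} reproduces $g$, so that the two optimization problems coincide. The central tool is the involution property of the concave conjugate: for a closed proper concave function, conjugating twice returns the original function (the concave version of the Fenchel--Moreau theorem).

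First I would eliminate the kernel weights $\vd$ from the Tikhonov-type objective~\eqref{eq:mkl-tikhonov} built from this $h$, just as in Theorem~\ref{thm:forward}. Fixing the $f_m$ and writing $x_m=\|f_m\|_{\calH_m}^2$, the inner minimization over $\vd$ is $\frac{1}{2}\inf_{\vd\in\Real_+^M}\bigl(\sum_m x_m/d_m+h(\vd)\bigr)$. Substituting $y_m=1/d_m$ and using $\tilde h(\vy)=-h(1/y_1,\ldots,1/y_M)$ converts this into $\frac{1}{2}\inf_{\vy\in\Real_+^M}\bigl(\vx\T\vy-\tilde h(\vy)\bigr)$, i.e.\ half the concave conjugate of $\tilde h$, which is precisely the block-norm regularizer~\eqref{eq:gdef} that the forward theorem associates with $h$.

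Next I would express $\tilde h$ through the given formula for $h$. Since $h(d_1,\ldots,d_M)=-2g^{\ast}(1/(2d_1),\ldots,1/(2d_M))$, inverting the arguments coordinatewise gives $\tilde h(\vy)=2g^{\ast}(\vy/2)$. Plugging this into the infimum above and performing the change of variables $\vz=\vy/2$ absorbs both factors of two, leaving $\inf_{\vz\in\Real_+^M}\bigl(\vx\T\vz-g^{\ast}(\vz)\bigr)$, which is exactly the concave biconjugate $g^{\ast\ast}(\vx)$. Invoking Fenchel--Moreau for concave functions then yields $g^{\ast\ast}=g$, so the recovered block-norm regularizer equals $g$ and the correspondence with Theorem~\ref{thm:forward} is established.

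The main obstacle is not the algebra---which is routine bookkeeping of the two factors of two and the inversion $\vy\mapsto 1/\vy$---but ensuring that the biconjugation identity genuinely applies. One must check that $g$ is a closed (upper semicontinuous) proper concave function on $\Real_+^M$, and handle the boundary behaviour where some $d_m\to 0$ (equivalently $y_m\to+\infty$), exactly as the forward theorem did by noting that $\tilde h\le 0$ forces such cases to occur only when the corresponding $x_m=\|f_m\|_{\calH_m}^2=0$. Restricting to closed concave $g$ for which the relevant infima are attained in $\Real_+^M$ removes this difficulty and makes the argument exact.
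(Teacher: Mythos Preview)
Your proposal is correct and follows essentially the same route as the paper. The paper's three-line proof simply reads the forward relation backwards---writing $h(d_1,\ldots,d_M)=-\tilde h(1/d_1,\ldots,1/d_M)=-(2g)^{\ast}(1/d_1,\ldots,1/d_M)=-2g^{\ast}(1/(2d_1),\ldots,1/(2d_M))$ using the identity $\tilde h=(2g)^{\ast}$ already recorded at the end of the proof of Theorem~\ref{thm:forward} together with the scaling rule $(2g)^{\ast}(\vy)=2g^{\ast}(\vy/2)$---whereas you verify the formula by feeding it through the forward recipe and invoking Fenchel--Moreau to collapse $g^{\ast\ast}$ to $g$; both arguments hinge on the same biconjugation step, and your version is simply more explicit about the closedness hypothesis needed for it to hold.
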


\begin{proof}
 The converse statement
holds because
\begin{align*}
 h(d_1,\ldots,d_M)&=-\tilde{h}(1/d_1,\ldots,1/d_M)\\
&=-(2 g)^\ast(1/d_1,\ldots,1/d_M)\\
&=-2 g^\ast(1/(2d_1),\ldots,1/(2d_M)),
\end{align*}
where $g^{\ast}$ is the concave conjugate of $g$.
\end{proof}

Theorem~\ref{thm:backward} can be used to derive the kernel-weight-based
regularizer $h$ corresponding to the elastic-net MKL in
Example~\ref{ex:elastic} as in the following corollary.

\begin{corollary}[Kernel-weight-based formulation for Elastic-net MKL]
The block-norm-based regularizer $g$ defined in Example~\ref{ex:elastic}
 is a concave function, and the corresponding kernel-weight-based
 regularizer can be obtained as follows:
\begin{align*}
 h(\vd)&=\sum_{m=1}^{M}\frac{(1-\lambda)^2d_m}{1-\lambda d_m}.
\end{align*}
\end{corollary}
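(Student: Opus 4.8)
The plan is to invoke Theorem~\ref{thm:backward}, whose hypothesis is exactly that the block-norm regularizer $g$ be concave and whose conclusion produces $h$ from the concave conjugate $g^\ast$. So there are two tasks: verify concavity of $g$, and evaluate $g^\ast$ at the point $\bigl(1/(2d_1),\ldots,1/(2d_M)\bigr)$.

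First I would check concavity. Since $g(\vx)=\sum_{m=1}^M\bigl((1-\lambda)\sqrt{x_m}+\tfrac{\lambda}{2}x_m\bigr)$ is separable, it is enough to treat one summand. For $0\le\lambda\le1$ the weight $1-\lambda$ is nonnegative, so each summand is a nonnegative multiple of the concave map $x_m\mapsto\sqrt{x_m}$ plus a linear term; hence it is concave, and a finite sum of concave functions is concave.

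Next, separability of $g$ makes $g^\ast$ separable as well, so the conjugate reduces to the scalar computation $g^\ast(y)=\inf_{x\ge0}\bigl((y-\tfrac{\lambda}{2})x-(1-\lambda)\sqrt{x}\bigr)$. I would locate the stationary point by differentiating in $x$, which gives $\sqrt{x}=(1-\lambda)/(2y-\lambda)$, and substitute back to obtain $g^\ast(y)=-\tfrac{(1-\lambda)^2}{2(2y-\lambda)}$. Setting $y=1/(2d_m)$, using $2\cdot\tfrac{1}{2d_m}-\lambda=(1-\lambda d_m)/d_m$, and plugging into $h(\vd)=-2\sum_{m=1}^Mg^\ast(1/(2d_m))$ then yields $h(\vd)=\sum_{m=1}^M\frac{(1-\lambda)^2d_m}{1-\lambda d_m}$, as claimed.

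The real care here is not in the algebra but in the \emph{domain} of the conjugate. The stationary point lies in $\Real_+$ and the infimum is attained only when $2y-\lambda>0$; for $y\le\lambda/2$ the scalar objective is a linear term minus a square root with nonpositive slope and is unbounded below, so $g^\ast(y)=-\infty$. Translated through $y=1/(2d_m)$, the condition $2y-\lambda>0$ is precisely $\lambda d_m<1$, which is exactly the regime where the closed-form denominator $1-\lambda d_m$ is positive. For $d_m\ge1/\lambda$ one gets $g^\ast=-\infty$ and hence $h=+\infty$, so the stated formula is the correct expression on its natural effective domain and the boundary cases are consistent.
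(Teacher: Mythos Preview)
Your argument is correct and follows essentially the same route as the paper: invoke Theorem~\ref{thm:backward}, note concavity from the combination of square root and linear terms, exploit separability to reduce to a scalar conjugate, differentiate to find the minimizer, and substitute back. Your explicit discussion of the effective domain (the requirement $\lambda d_m<1$ for the formula to be finite) is a useful addition that the paper leaves implicit.
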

\begin{proof}
 Since the regularizer $g$ defined in Example~\ref{ex:elastic} is a
 convex combination of two concave functions (square-root and linear
 functions), it is clearly a concave function. Next, noticing that the
 regularizer $g$ is separable into each component, we 
obtain its concave conjugate as follows:
\begin{align}
\label{eq:hdef-elastic}
 \tilde{h}(y_m)&=\inf_{x_m\geq 0}\left(x_my_m-2\left((1-\lambda)\sqrt{x_m}+\frac{\lambda}{2}x_m\right)\right).
\end{align}
Taking the derivative with respect to $x_m$, we have
\begin{align*}
 y_m-\frac{1-\lambda}{\sqrt{x_m}}-\lambda&=0.
\end{align*}
Substituting the above expression back into \Eqref{eq:hdef-elastic}, we
 have
\begin{align*}
 \tilde{h}(y_m)&=-\frac{(1-\lambda)^2}{y_m-\lambda}.
\end{align*}
Therefore
\begin{align*}
 h(d_1,\ldots,d_M)&=-\sum_{m=1}^{M}\tilde{h}(1/d_m)=\sum_{m=1}^{M}\frac{(1-\lambda)^2d_m}{1-\lambda d_m}.
\end{align*}
\end{proof}
Note that in the special case $\lambda=0$ (corresponding to Example~\ref{ex:block1}), the kernel-weight-based
regularizer \eqref{eq:hdef-elastic} reduces to the
$\ell_p$-norm MKL with $p=1$ in Example~\ref{ex:lp-tikhonov}. See also Table~\ref{tab:correspond}.
\begin{table}[tb]
 \begin{center}
\caption{Correspondence between the regularizer $h$ in the
  kernel-weight-based regularization~\eqref{eq:mkl-tikhonov} and the
  concave function $g$ in the block-norm formulation \eqref{eq:mkl-block-gen}. $I_{[0,1]}$ denotes the indicator function of
  the interval $[0,1]$; i.e., $I_{[0,1]}(x)=0$ (if $x\in[0,1]$), and
  $I_{[0,1]}(x)=\infty$ (otherwise). }
\label{tab:correspond}
  \begin{tabular}{c|c|c|c|c}
   MKL model         & $g(x)$       & $h(d_m)$    & Optimal kernel weight\\
\hline
\hline
   block 1-norm MKL  & $\sqrt{x}$  & $d_m$   &$d_m=\|f_m\|_{\calH_m}$\\
\hline
   $\ell_p$-norm MKL & $\frac{1+p}{2p}x^{p/(1+p)}$ & $d_m^p/p$ & $d_m=\|f_m\|_{\calH_m}^{2/(1+p)}$\\
\hline
Uniform-weight MKL & \multirow{2}{*}{$x/2$} &
	   \multirow{2}{*}{$I_{[0,1]}(d_m)$} &   \multirow{2}{*}{$d_m=1$}\\
(block 2-norm MKL) &  & &\\
\hline
block  $q$-norm MKL &  \multirow{2}{*}{$\frac{1}{q}x^{q/2}$} &
	   \multirow{2}{*}{$-\frac{q-2}{q} d_m^{-q/(q-2)}$} & \multirow{2}{*}{$d_m=\|f_m\|_{\calH_m}^{2-q}$}\\
($q>2$) &   &  & \\
\hline
   Elastic-net MKL & $(1-\lambda)\sqrt{x}+\frac{\lambda}{2}x$ & 
$\frac{(1-\lambda)^2d_m}{1-\lambda d_m}$ &  $d_m=\frac{\|f_m\|_{\calH_m}}{(1-\lambda)+\lambda\|f_m\|_{\calH_m}}$
  \end{tabular}
 \end{center}
\end{table}

\section{Empirical Bayesian multiple kernel learning}
\label{sec:bayes}
For a separable regularizer (see Corollary~\ref{cor:separable}), the kernel-weight-based formulation~\eqref{eq:mkl-tikhonov} allows a
probabilistic interpretation as a hierarchical maximum a posteriori
(MAP) estimation problem as follows:
\begin{align}
\label{eq:mkl-tikhonov-sep}
  \minimize_{
\substack{
f_1\in\calH_1,\ldots,f_M\in\calH_M,\\
b\in\Real,\\
\vd\in\Real_+^{M}}}&\sum_{i=1}^N\ell\left(y_i,\textstyle\sum_{m=1}^Mf_m(x_i)+b\right)
+\frac{C}{2}\sum_{m=1}^M\left(\frac{\|f_m\|_{\calH_m}^2}{d_m}+  h(d_m)\right).
\end{align}
The loss term can be considered as a
negative log-likelihood. The first regularization term
$\|f_m\|_{\calH_m}^2/d_m$ can be considered as the negative log of a
Gaussian process prior with variance scaled by the hyper-parameter
$d_m$. The last regularization term $h(d_m)$ corresponds to the
negative log of a hyper-prior distribution $p(d_m)\propto \exp(- h(d_m))$.
In this section, instead of a MAP estimation, we propose to maximize the
marginal likelihood (evidence) to obtain the kernel weights; we 
call this approach empirical Bayesian MKL. 

We rewrite the (separable) kernel-weight regularization problem~\eqref{eq:mkl-tikhonov-sep} as a probabilistic generative model as
follows: 
\begin{align*}
d_m &\sim \frac{1}{Z_1}\exp(- h(d_m))\qquad(m=1,\ldots,M),\\
f_m & \sim GP(f_m;0, d_m k_m)\qquad(m=1,\ldots,M)\\
y_i &\sim \frac{1}{Z_2}\exp(-\ell(y_i,f_1(x_i)+f_2(x_i)+\cdots +f_M(x_i))),
\end{align*}
where $Z_1$ and $Z_2$ are
normalization constants; $GP(f;0,k)$ denotes the Gaussian
process~\citep{RasWil06} with mean zero and covariance function $k$. We
omit the bias term for simplicity.

When the loss function is quadratic
$\ell(y_i,z_i)=(y_i-z_i)^2/({2\sigma_y^2})$,
we can analytically integrate out the Gaussian process random variable
$(f_m)_{m=1}^M$  and compute the
negative log of the marginal likelihood as follows:
\begin{align}
\label{eq:nloglike}
-\log p(\vy|\vd)&=
\frac{1}{2}\vy\T\bar{\mK}(\vd)^{-1}\vy+\frac{1}{2}\log\left|\bar{\mK}(\vd)\right|
\intertext{where $\vd=(d_1,\ldots,d_M)\T$,
 $\mK_m=(k_m(x_i,x_j))_{i,j=1}^N$ is the Gram matrix, and}
\bar{\mK}(\vd)&:=\sigma_y^2\mI_N+\sum_{m=1}^Md_m\mK_m.\nonumber
\end{align}

Using the quadratic upper-bounding technique in \cite{WipNag08}, we can
rewrite the marginal likelihood~\eqref{eq:nloglike}
in a more meaningful expression as follows:
\begin{align}
\label{eq:obj-bayes}
-\log p(\vy|\vd)&=\!\!\!\min_{\substack{\vf_1\in\Real^N,\\\ldots,
 \vf_M\in\Real^N}}\!\!\left(
 \frac{1}{2\sigma_y^2}\left\|\vy-\textstyle\sum\limits_{m=1}^M\vf_m\right\|^2\!\!+\frac{1}{2}{\textstyle\sum\limits_{m=1}^M}\frac{\|\vf_m\|_{\mK_m}^2}{d_m}\right)
+ \frac{1}{2}\log\left|\sigma_y^2\mI_N+{\textstyle\sum\limits_{m=1}^M}d_m\mK_m\right|,
\end{align}
where $\vf_m:=(f_m(x_1),\ldots,f_m(x_N))\T$, and
$\|\vf_m\|_{\mK_m}^2=\vf_m\T\mK_m^{-1}\vf_m$. See
\cite{WipNag08,WipNag09} for more details. Now we can see that the minimization of marginal
likelihood~\eqref{eq:obj-bayes} is a special case of kernel-weight-based
regularization~\eqref{eq:mkl-tikhonov} with a {\em concave,
nonseparable} regularizer defined as
\begin{align*}
 h(\vd)&= \log\left|\sigma_y^2\mI_N+{\textstyle\sum\limits_{m=1}^M}d_m\mK_m\right|.
\end{align*}
We ask whether we can derive the block-norm formulation
(see \Secref{sec:block-norm}) for the marginal likelihood based
MKL. Unfortunately Theorem~\ref{thm:forward} is not applicable because $h$
is not convex. The result in \citet[Appendix B]{WipNag09} suggests that
reparameterizing $d_m=\exp(\eta_m)$, we have the following expression
for the block-norm-based regularizer:
\begin{align}
\label{eq:gdef-bayes}
 g(\vx)=\frac{1}{2}\inf_{\veta\in\Real^{M}}\left(\sum_{m=1}^{M}x_me^{-\eta_m}+\log\left|\sigma_y^2\mI_N+{\textstyle\sum\limits_{m=1}^{M}}e^{\eta_m}\mK_m\right|\right).
\end{align}
Importantly, the above minimization is a convex minimization
problem. However, this does not change the fact that the original
minimization problem~\eqref{eq:obj-bayes} is a non-convex one. In fact,
the term $\|\vf_m\|_{\mK_m}^2e^{-\eta_m}$ is convex for $\eta_m$
but not jointly convex for $\vf_m$ and $\eta_m$.

Instead of directly minimizing (e.g., by gradient descent) the marginal
likelihood~\eqref{eq:nloglike} to obtain a hyperparameter maximum
likelihood estimation, we employ an alternative approach known as the MacKay
update~\citep{Mac92,WipNag09}.
The MacKay update alternates between the minimization of right-hand-side of
\Eqref{eq:obj-bayes} with respect to $\vf_m$ and an (approximate)
minimization of \Eqref{eq:gdef-bayes} with respect to $\eta_m$.
The minimization in \Eqref{eq:obj-bayes} is a fixed kernel-weight learning
problem (see \Secref{sec:fixed}) and for the special case of quadratic
loss, it is simply a least-squares problem. For the approximate minimization
of \Eqref{eq:gdef-bayes} with respect to $\eta_m$, we simply take the
derivative of \Eqref{eq:gdef-bayes} and set is to zero as follows:
\begin{align*}
-x_m e^{-\eta_m}
+{\rm Tr}\left((\sigma^2\mI_N+\textstyle\sum_{m=1}^Me^{\eta_m}\mK_m)^{-1} e^{\eta_m}\mK_m\right)=0,
\end{align*}
from which we have the following update equation:
\begin{align*}
 e^{\eta_m}&\leftarrow\frac{x_m}{{\rm Tr}\left((\sigma^2\mI_N+\textstyle\sum_{m=1}^Me^{\eta_m}\mK_m)^{-1} e^{\eta_m}\mK_m\right)}.
\end{align*}
Therefore, substituting $d_m=e^{\eta_m}$ and $x_m=\|\vf_m\|_{\mK_m}^2$,
we obtain the following iteration:
\begin{align}
\label{eq:mackay-1}
 (\vf_m)_{m=1}^M&\leftarrow\argmin_{(\vf_m)_{m=1}^M}\Biggl(
 \frac{1}{2\sigma_y^2}\left\|\vy-\sum\nolimits_{m=1}^M\vf_m\right\|^2
+\frac{1}{2}\sum\nolimits_{m=1}^M\frac{\|\vf_m\|_{\mK_m}^2}{d_m}\Biggr)\\
\label{eq:mackay-2}
d_m &\leftarrow\frac{\|\vf_m\|_{\mK_m}^2}{{\rm
 Tr}\left((\sigma^2\mI_N+\sum_{m=1}^Md_m\mK_m)^{-1} d_m\mK_m\right)} \quad(m=1,\ldots,M).
\end{align}
The convergence of this procedure is not established theoretically, but it
is known to converge rapidly in many practical
situations~\citep{Tip01}. 

\section{Numerical experiments}
\label{sec:exp}
In this section, we numerically compare MKL algorithms we discussed in
this paper on three binary classification problems we have taken
from Caltech 101 dataset~\citep{FeiFerPer04}. We have generate 1,760
kernel functions by combining four SIFT features, 22 spacial
decompositions (including the spatial pyramid kernel), two kernel
functions, and 10 kernel parameters\footnote{Preprocessed data is
available from
\texttt{http://www.ibis.t.u-tokyo.ac.jp/ryotat/prmu09/data/}.}. More
precisely, the kernel functions were constructed as combinations of the
following  four factors in the prepossessing pipeline:
\begin{itemize}
 \item Four types of SIFT features, namely hsvsift (adaptive scale),
       sift (adaptive scale), sift (scale fixed to 4px), sift (scale
       fixed to 8px). We used the implementation by \cite{vdSGevSno10}. The local features were sampled uniformly
       (grid) from each input image. We randomly chose 200 local
       features and assigned visual words to every local features using
       these 200 points as cluster centers.
 \item Local histograms obtained by partitioning the image into
       rectangular cells of the same size in a hierarchical manner;
       i.e., level-0 partitioning has 1 cell (whole image) level-1
       partitioning has 4 cells and level-2 partitioning has 16
       cells. From each cell we computed a kernel function by
       measuring the similarity of the two local feature histograms
       computed in the same cell from two images. In addition, the
       spatial-pyramid kernel~\citep{GraDar07,LazSchPon06}, which
       combines these kernels by exponentially decaying weights, was
       computed. In total, we used 22 kernels (=one level-0 kernel + four
       level-1 kernels + 16 level-2 kernels + one spatial-pyramid
       kernel). See also \cite{GehNow09} for a similar approach. 
\item  Two kernel functions (similarity measures). We used the Gaussian
       kernel:
\begin{align*}
 k(q(x),q(x'))&=\exp\Bigl(-\sum_{j=1}^n\frac{(q_j(x)-q_j(x'))^2}{2\gamma^2}\Bigr),
\intertext{for 10 band-width parameters ($\gamma$'s) linearly spaced between
 $0.1$ and $5$ and the $\chi^2$-kernel:}
 k(q(x),q(x'))&=\exp\Bigl(-\gamma^2\sum_{j=1}^n\frac{(q_j(x)-q_j(x'))^2}{(q_j(x)+q_j(x'))}\Bigr)
\end{align*}
for 10 band-width parameters ($\gamma$'s) linearly spaced between $0.1$ and $10$,
       where $q(x),q(x')\in\mathbb{N}_+^n$ are the 
       histograms computed in some region of two images $x$ and $x'$.
\end{itemize}
The combination of 4 sift features, 22 spacial regions, 2
kernel functions, and 10 parameters resulted in  1,760 kernel functions
in total.

We compare uniform kernel combination, block 1-norm MKL, Elastic-net MKL
with $\lambda=0.5$, Elastic-net MKL with $\lambda$ chosen by cross
validation on the training-set, $\ell_2$-norm
MKL~\cite{UAI:Cortes+etal:2009,KloBreSonLasMueZie09} and empirical Bayesian
MKL. $\ell_2$-norm MKL uses the hinge loss, and the other MKL models
(except empirical Bayesian MKL) use the logit loss.  We also include uniform kernel combination with the squared loss to
make the comparison between the empirical Bayesian MKL and the
rest possible. Since the difference between Uniform (logit) and
Uniform (square) is small, we expect that the discussion here is not
specific to the choice of loss functions. 
For the Elastic-net MKL~\eqref{eq:mkl-block-elastic}, we either fix the
constant $\lambda$ as $\lambda=0.5$ (Elastic (0.5)) or we choose the
value of $\lambda$ from $\{0,0.2,\ldots,0.8,1\}$. MKL models with the logit
loss are implemented in SpicyMKL\footnote{Available from \texttt{http://www.simplex.t.u-tokyo.ac.jp/\~{}s-taiji/software/SpicyMKL/}.} toolbox~\citep{SuzTom09}.
For the empirical
Bayesian MKL, we use the MacKay update~\eqref{eq:mackay-1}-\eqref{eq:mackay-2}.
We used the implementation of $\ell_2$-norm MKL in Shogun toolbox~\cite{SonRatHenWidBehZieBonBinGehFra10}.
The regularization constant $C$ was chosen by $2\times 4$-fold cross
validation on the training-set for each method. We used the candidate
$\{0.0001, 0.001, 0.01, 0.1, 1, 10\}$ for all methods except
$\ell_2$-norm MKL and $\{0.01, 0.1, 1, 10, 100, 1000\}$ for the
$\ell_2$-norm MKL. 

\label{sec:exp}
\begin{figure}[tb]
 \begin{center}
  \subfigure[Accuracy averaged over 20 train/test splits]{\includegraphics[width=.5\textwidth]{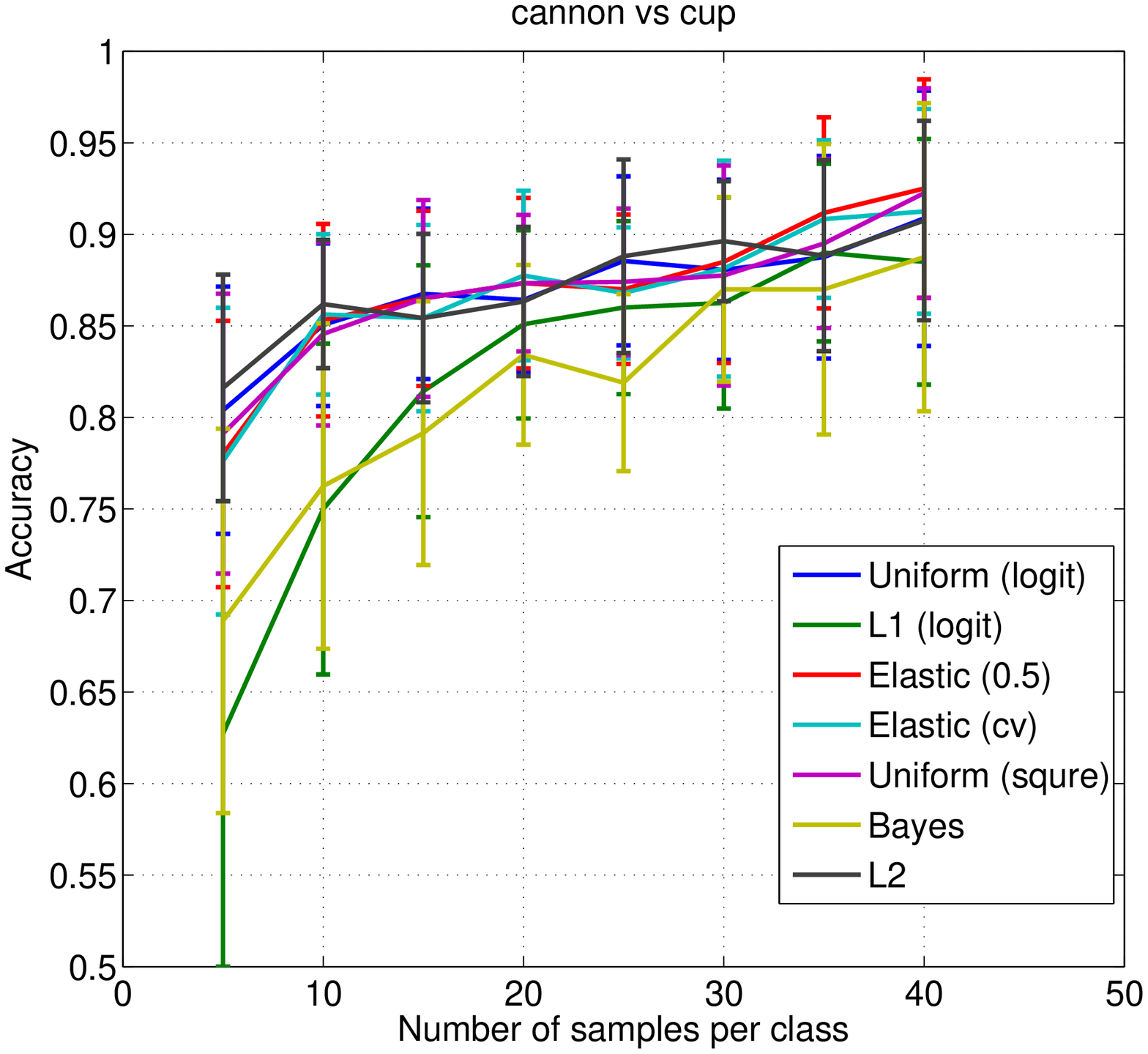}\label{fig:canon-cup-accuracy}}~\subfigure[Obtained
  kernel weights]{\includegraphics[width=.55\textwidth]{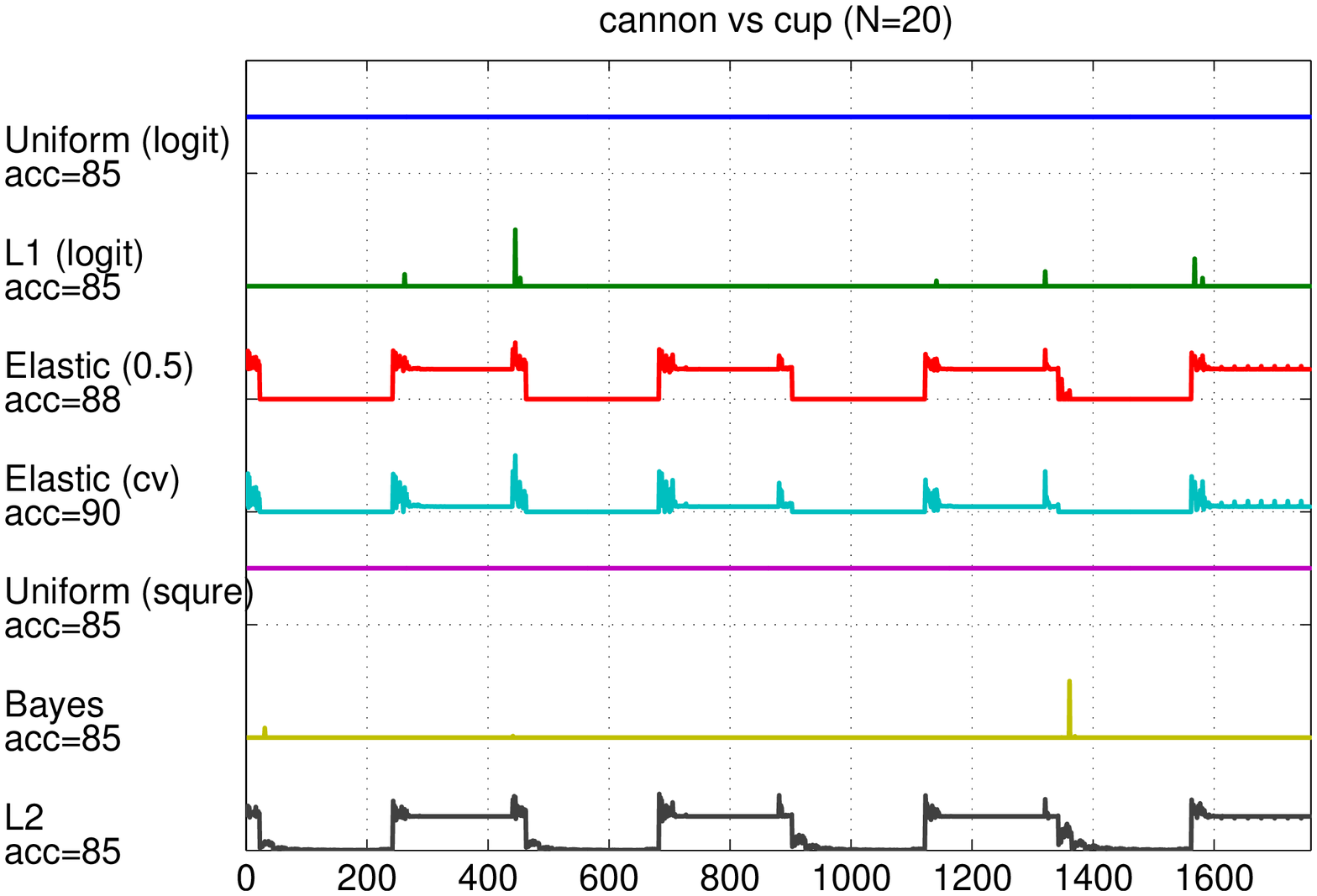}\label{fig:cannon-cup-weights}}
  \caption{Canon vs Cup from Caltech 101 dataset.}
  \label{fig:cannon-cup}
\end{center}
\end{figure}

\begin{figure}[tb]
 \begin{center}
  \subfigure[Accuracy averaged over 20 train/test splits]{\includegraphics[width=.5\textwidth]{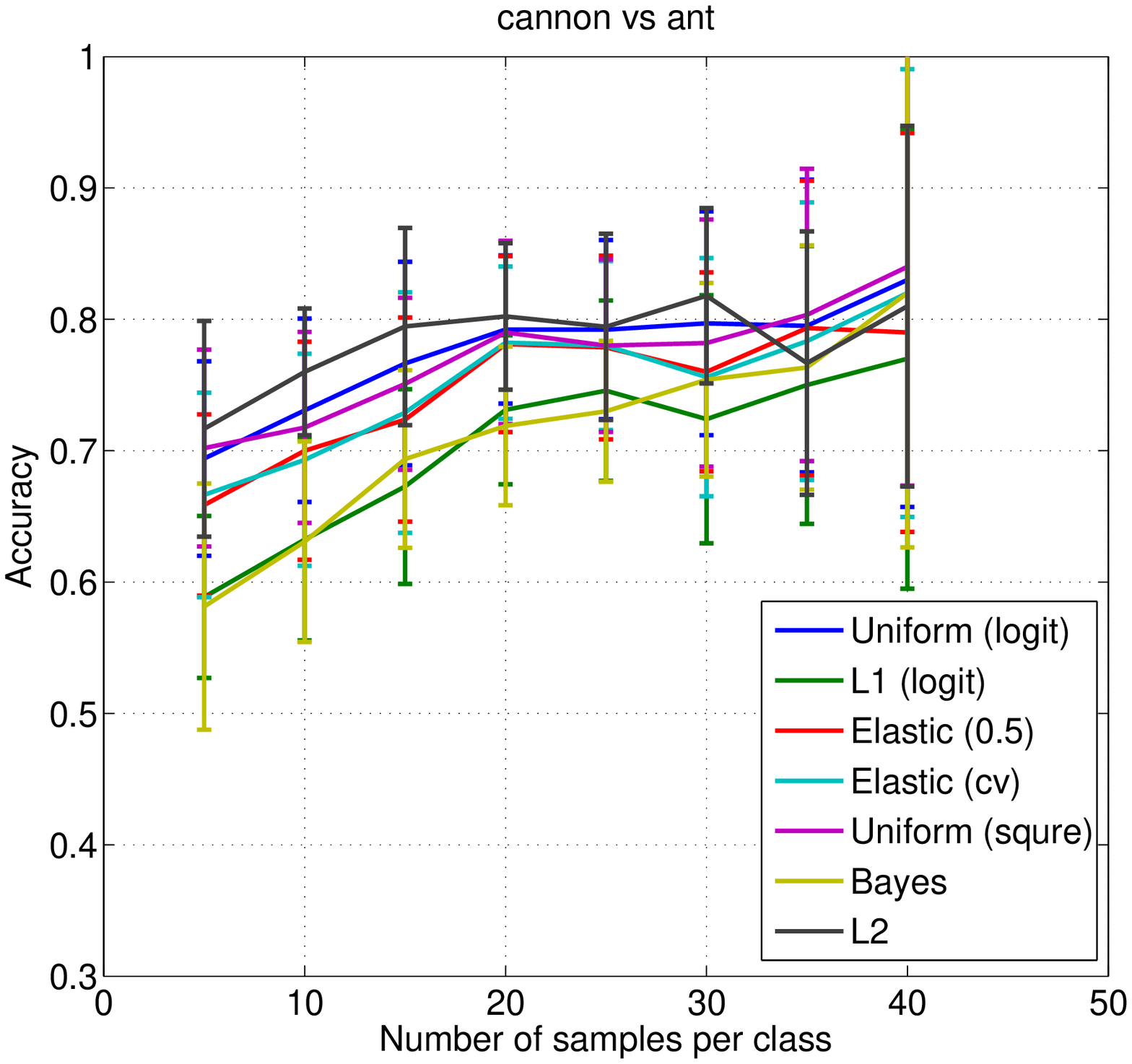}\label{fig:canon-ant-accuracy}}~\subfigure[Obtained
  kernel weights]{\includegraphics[width=.55\textwidth]{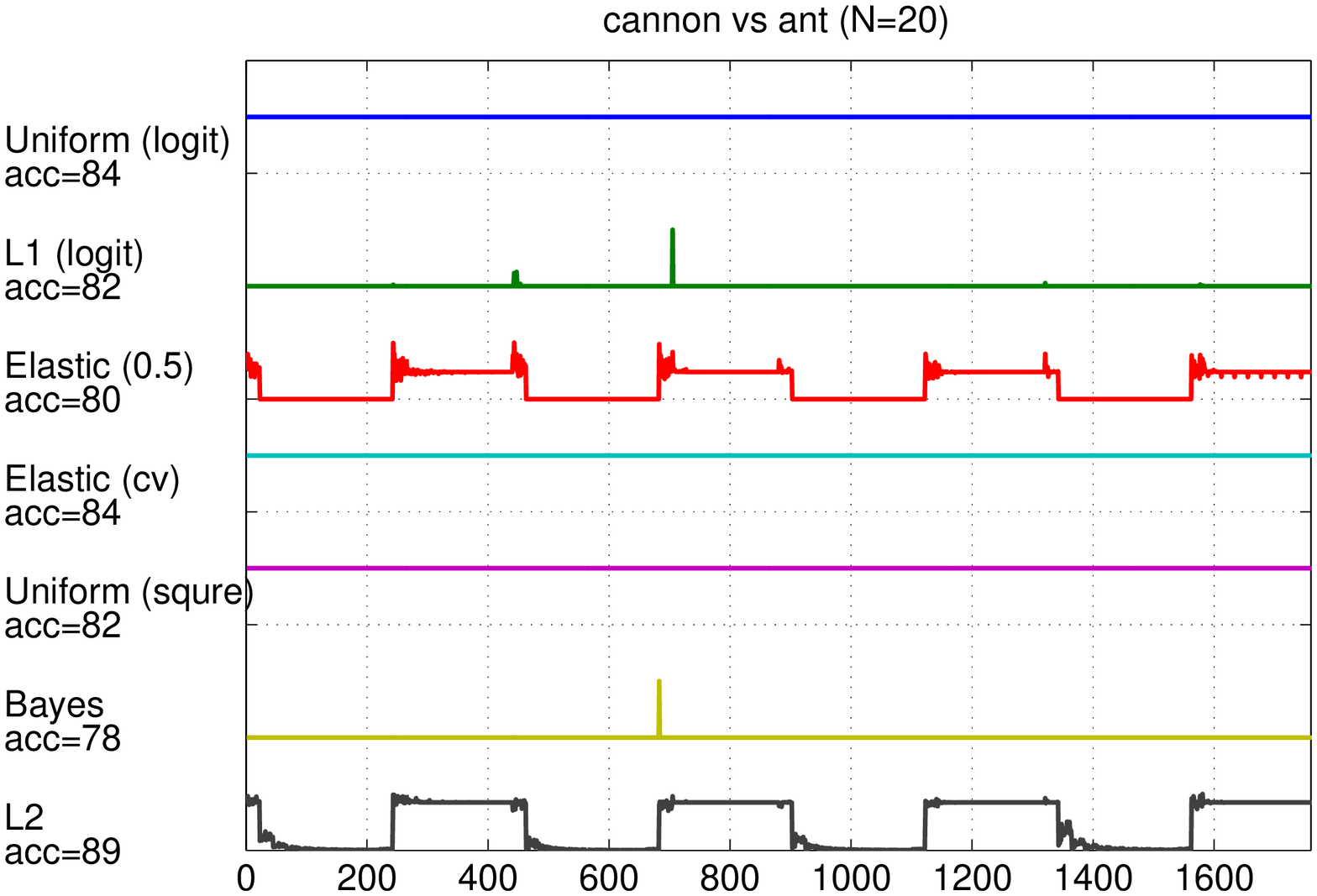}\label{fig:cannon-ant-weights}}
  \caption{Cannon vs Ant from Caltech 101 dataset.}
  \label{fig:cannon-ant}
\end{center}
\end{figure}

∑\begin{figure}[tb]
 \begin{center}
  \subfigure[Accuracy averaged over 20 train/test splits]{\includegraphics[width=.5\textwidth]{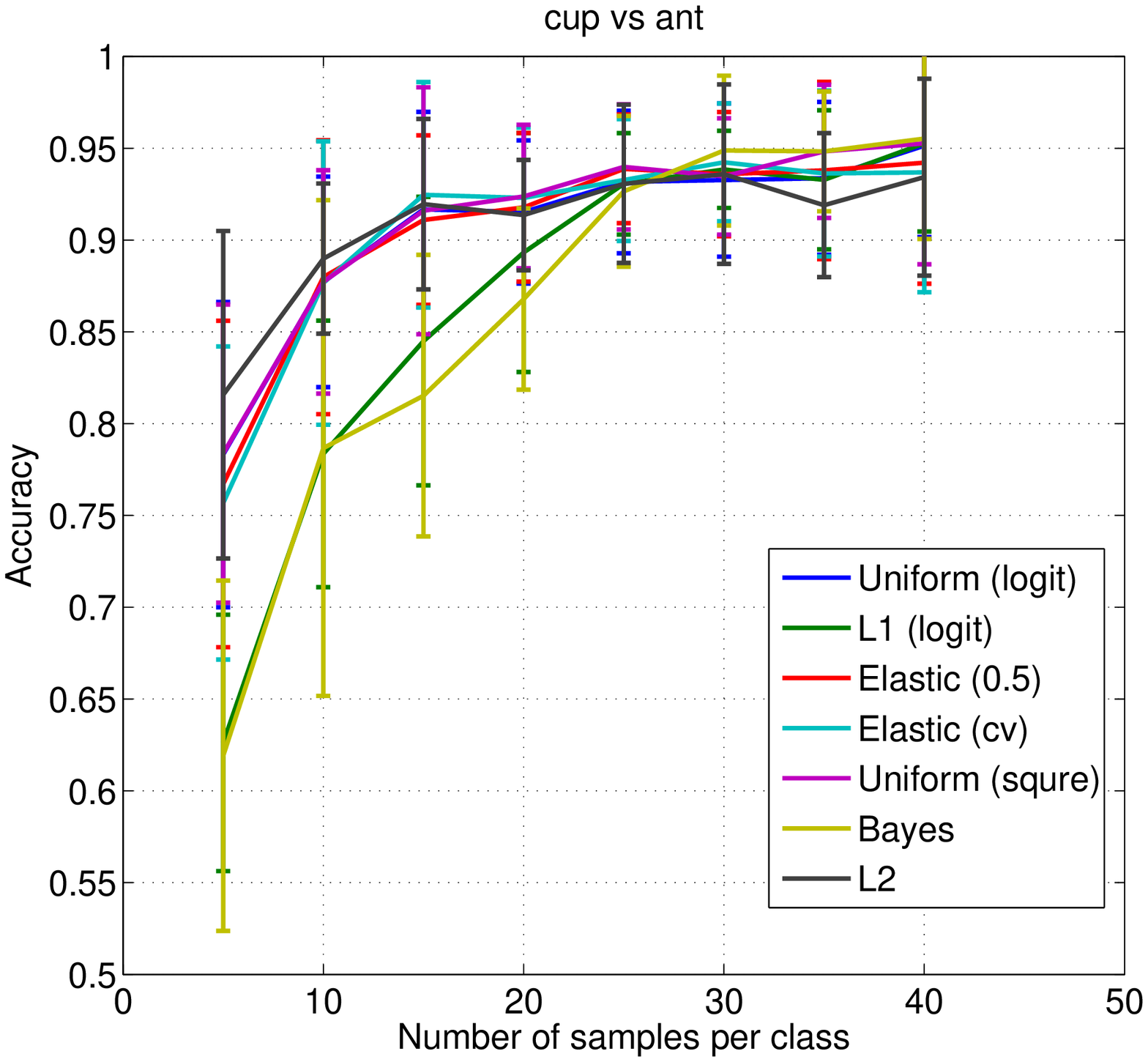}\label{fig:cup-ant-accuracy}}~\subfigure[Obtained
  kernel weights]{\includegraphics[width=.55\textwidth]{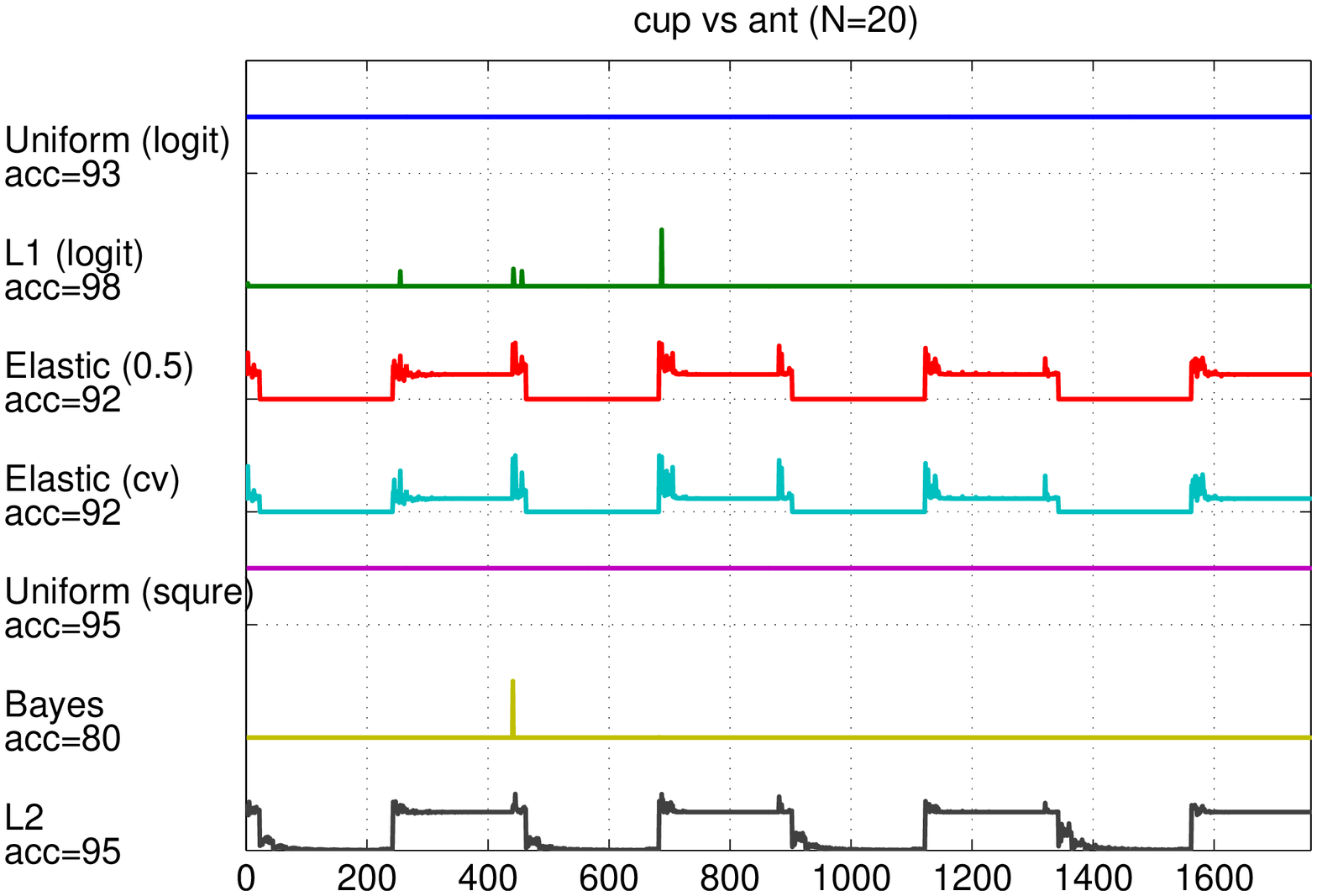}\label{fig:cup-ant-weights}}
  \caption{Cup vs Ant from Caltech 101 dataset.}
  \label{fig:cup-ant}
\end{center}
\end{figure}

Figures~\ref{fig:cannon-cup}--\ref{fig:cup-ant} show the
results of applying different MKL algorithms.
We can see that overall $\ell_2$-norm MKL, Elastic-net MKL,  and
uniformly-weighted MKL 
perform favorable compared to other MKL methods. Empirical Bayesian MKL
and block 1-norm MKL tend to perform worse than the above three methods,
especially when the number of
samples per class is smaller than 20. However, in \Figref{fig:cup-ant}, they
do perform comparably for the number of samples per class above 30.
Although Elastic-net MKL and $\ell_2$-norm MKL perform
almost the same as uniform MKL in terms of accuracy,
the right panels show that these methods can find important kernel
components automatically. More specifically, on the ``Cannon vs Cup''
dataset (\Figref{fig:cannon-cup}), Elastic-net MKL chose 88
Gaussian RBF kernel functions and 792 $\chi^2$ kernel functions. Thus it
prefers $\chi^2$ kernels to Gaussian RBF kernels. This agrees with the
common choice in computer vision literature. In addition, Elastic-net MKL
consistently chose the band width parameter $\gamma=0.1$ for the Gaussian
RBF kernels but it never chose $\gamma=0.1$ for the $\chi^2$ kernels;
instead it averaged all $\chi^2$ kernels from $\gamma=1.2$ to $\gamma=10$.

\section{Conclusion}
\label{sec:conclusion}
We have shown that various MKL and structured sparsity models including
$\ell_p$-norm MKL, Elastic-net MKL, multi-task learning, Wedge penalty,
and overlapped group lasso can be seen as applications of different
regularization strategies. These models have been conventionally
presented in either kernel-weight-based regularization or
block-norm-based regularization. We have shown that these two
formulations can be systematically mapped to each other under some
conditions through a concave conjugate transformation; see
Table~\ref{tab:correspond}. 

Furthermore, we have proposed a marginal-likelihood-based kernel
learning algorithm. We have shown that the propose empirical Bayesian
MKL can be considered to be employing a nonconvex nonseparable
regularizer on the kernel weights. Furthermore, we have derived the
expression for the block-norm regularizer corresponding to the proposed
empirical Bayesian MKL. 

We have tested the classification performance as well as the resulting
kernel weights of various regularized MKL models we have discussed on
visual categorization task from Caltech 101 dataset using 1,760 kernels.
We have shown that Elastic-net MKL can achieve comparable classification
accuracy to uniform kernel combination with roughly half of the
candidate kernels and provide information about the usefulness of the
candidate kernels. $\ell_2$-norm MKL also achieves similar
classification performance and qualitatively similar kernel
weights. However $\ell_2$-norm MKL does not achieve sparsity in the
kernel weights in contrast to Elastic-net MKL. Empirical Bayesian MKL tends to
perform worse than the above two methods probably because the kernel
weights it obtains becomes extremely sparse. One way to avoid such
solution is to introduce hyper-priors for the kernel weights as in \cite{Urt10}.

We are currently aiming to relax the sufficient condition in
Theorem~\ref{thm:forward} to guarantee mapping from the
kernel-weight-based formulation to block-norm-based formulation. We
would also like to have a finer characterization of the block-norm
regularizer corresponding to the empirical Bayesian MKL (see also
\cite{WipNag08}).  Theoretical argument concerning when to use sparse MKL
models (e.g., $\ell_1$-norm MKL or empirical Bayesian MKL) and when to
use non-sparse MKL models ($\ell_p$-norm MKL) is also necessary.

\subsection*{Acknowledgement}

We would like to thank Hisashi Kashima and Shinichi Nakajima for helpful
discussions. This work was partially supported by MEXT Kakenhi 22700138,
22700289, and NTT Communication Science Laboratories. 

{\small 
\bibliography{mkl}
}

\appendix
%

\section{Proof of \Eqref{eq:mkl-norm} in a finite dimensional case}
\label{sec:proof-mkl-norm}
In this section, we provide a proof of \Eqref{eq:mkl-norm} when
$\calH_1,\ldots,\calH_m$ are all finite dimensional. We assume
that the input space $\calX$ consists of $N$ points $x_1,\ldots,x_N$,
for example the training points. The function $f_m\in\calH_m$ is
completely specified by the function values at the $N$-points
$\vf_m=(f_m(x_1),\ldots,f_m(x_N))\T$. The kernel function $k_m$ is also
specified by the Gram matrix $\mK_m=(k_m(x_i,x_j))_{i,j=1}^{N}$. The
inner product $\dot{f_m}{g_m}_{\calH_m}$ is written as
$\dot{f_m}{g_m}_{\calH_m}=\vf_m\T\mK_m^{-1}\vg_m$, where $\vg_m$ is the
$N$-dimensional vector representation of $g_m\in\calH_m$, assuming that
the Gram matrix $\mK_m$ is positive definite. It is easy to check the
reproducibility; in fact,
$\dot{f_m}{k_m(\cdot,x_i)}=\vf_m\T\mK_m^{-1}\mK_m(:,i)=f(x_i)$, where
$\mK_m(:,i)$ is a column vector of the Gram matrix $\mK_m$ that corresponds to the
$i$th sample point $x_i$.

The right-hand side of \Eqref{eq:mkl-norm} is written as follows:
\begin{align*}
 \min_{\vf_1,\ldots,\vf_M\in\Real^N}&\sum_{m=1}^M\frac{\vf_m\T\mK_m^{-1}\vf_m}{d_m}\quad{\rm s.t.}\quad\sum_{m=1}^M\vf_m=\bar{\vf}.
\end{align*}
Forming the Lagrangian, we have
\begin{align*}
&\sum_{m=1}^M\frac{\vf_m\T\mK_m^{-1}\vf_m}{d_m}\\
&=\sum_{m=1}^M\frac{\vf_m\T\mK_m^{-1}\vf_m}{d_m}+2\valpha\T\!\!\left(\bar{\vf}-\sum\nolimits_{m=1}^M\vf_m\right)\\
&\geq-\valpha\T\left(\sum\nolimits_{m=1}^Md_m\mK_m\right)\valpha+2\valpha\T\bar{\vf}\\
&\xrightarrow{\max_{\valpha}} \bar{\vf}\T\left(\sum\nolimits_{m=1}^Md_m\mK_m\right)^{-1}\bar{\vf},
\end{align*}
where the equality is obtained for
$$\vf_m=d_m\mK_m\left(\sum\nolimits_{m=1}^Md_m\mK_m\right)^{-1}\bar{\vf}.$$

\end{document}